\documentclass{article} 
\usepackage{iclr2026_conference,times}


\usepackage{amsmath,amsfonts,bm}









\def\eqref#1{equation~\ref{#1}}









\def\1{\bm{1}}










\DeclareMathAlphabet{\mathsfit}{\encodingdefault}{\sfdefault}{m}{sl}
\SetMathAlphabet{\mathsfit}{bold}{\encodingdefault}{\sfdefault}{bx}{n}













\usepackage[dvipsnames,svgnames,x11names]{xcolor} 
\usepackage{booktabs}
\usepackage{multirow}
\usepackage{graphicx}
\usepackage{appendix}
\usepackage{subcaption}
\usepackage{times}
\usepackage{latexsym}

\usepackage[T1]{fontenc}

\usepackage[utf8]{inputenc}

\usepackage{microtype}

\usepackage{inconsolata}

\usepackage{graphicx}

\usepackage{amsthm}
\theoremstyle{plain}
\newtheorem{theorem}{Theorem}[section]

\newtheorem{corollary}[theorem]{Corollary}
\theoremstyle{definition}
\newtheorem{definition}{Definition}
\newtheorem{proposition}{Proposition}
\newtheorem{property}{Property}

\newtheorem{hypothesis}{Hypothesis}
\theoremstyle{remark}

\usepackage[most]{tcolorbox}
\usepackage{multicol}

 \makeatletter
\def\@fnsymbol#1{\ensuremath{\ifcase#1\or \dagger\or \ddagger\or
  \mathsection\or \mathparagraph\or \|\or **\or \dagger\dagger
  \or \ddagger\ddagger \else\@ctrerr\fi}}
\makeatother

\usepackage{dsfont}

\usepackage{multirow}
\usepackage{amsthm,amssymb,lipsum}
\usepackage{mathrsfs}
\usepackage{mathtools}
\usepackage{enumerate}
\usepackage{colortbl}
\usepackage{wrapfig}
\usepackage{bbding}
\usepackage{pifont}
\usepackage{xspace}
\usepackage{amsmath}
\usepackage{wasysym}
\usepackage{textcomp}
\usepackage[bottom]{footmisc}

\usepackage{makecell}

\usepackage{color}
\usepackage{tocloft}
\usepackage{caption}
\usepackage{float}
\usepackage{amsmath}
\usepackage{listings}
\lstdefinestyle{mypython}{
  language=Python,
  basicstyle=\ttfamily\small,
  numbersep=8pt,
  tabsize=4,
  showstringspaces=false,
  keywordstyle=\color{blue},
  stringstyle=\color{teal!70!black},
  commentstyle=\color{gray},
  breaklines=true
}
\usepackage{afterpage}

\usepackage{stackengine}

\usepackage[commandnameprefix=always]{changes}

\usepackage{bbm}

\usepackage{cuted}
\usepackage{duckuments}

\usepackage{tabularx}
\usepackage{booktabs}
\usepackage[percent]{overpic}
\usepackage{adjustbox}
\usepackage{utfsym}
\usepackage{thmtools,thm-restate}

\usepackage{epigraph}


\def\ie{{\it i.e.}\xspace}

\definecolor{drp-blue}{HTML}{1f77b4}
\definecolor{pretty-blue}{RGB}{0, 113, 188}
\definecolor{kaiming-green}{RGB}{57,181,74} 
\definecolor{mypurple}{RGB}{55,0,168} 
\definecolor{icmlblue}{rgb}{0,0.08,0.45} 
\definecolor{linecolor1}{HTML}{F1F7FB}
\definecolor{linecolor2}{HTML}{E3EFF7}
\definecolor{linecolor3}{HTML}{D5E4F0}

\definecolor{reconcolor}{HTML}{412F8A}
\definecolor{runpei-orange}{HTML}{F35F27}
\definecolor{runpei_blue}{HTML}{14294B}
\definecolor{datacolor}{HTML}{0009BF}
\definecolor{vitcolor}{HTML}{fc8e62}
\definecolor{xycolor}{HTML}{EF98AA}
\definecolor{lightpink}{HTML}{F9F0EE}
\definecolor{deeppink}{HTML}{EBD9E4}
\definecolor{lightpurple}{RGB}{123,107,143}
\definecolor{lightyellow}{RGB}{200,180,120}
\definecolor{MorandiLightBlue}{RGB}{180,200,240}
\definecolor{MorandiLighterBlue}{RGB}{240,245,255}
\definecolor{MorandiPink}{RGB}{200, 160, 180}
\definecolor{MorandiLightPink}{RGB}{250, 240, 245}

\definecolor{MorandiLightPink}{RGB}{250, 240, 245}
\definecolor{MorandiLightPink}{RGB}{250, 240, 245}





%
%


\usepackage{enumitem}

\usepackage{url}
\definecolor{LinkBlue}{HTML}{0B63C3}
\usepackage[colorlinks=True,
            linkcolor=Maroon,
            anchorcolor=blue,  
            urlcolor=LinkBlue,
            citecolor=icmlblue,
            ]{hyperref}

\usepackage[capitalise,nameinlink]{cleveref}
\title{When Reasoning Meets Its Laws}

\author{%
  Junyu Zhang~$^{1*\ddagger}$
  \quad
  Yifan Sun~$^{1*}$
  \quad
  Tianang Leng~$^{3*}$
  \quad
  Jingyan Shen~$^{4*}$
  \\
  \textbf{Liu Ziyin~$^{25\dagger}$
  \quad
  Paul Pu Liang~$^{2\dagger}$
  \quad
  Huan Zhang~$^{1\dagger}$}
\\\\
$^{1}$University of Illinois Urbana-Champaign\quad
$^{2}$Massachusetts Institute of Technology\\
$^{3}$University of Pennsylvania\quad
$^{4}$New York University\quad
$^{5}$NTT Research
}

\def\ours{{\scshape LoRe}}
\def\oursbench{{\scshape LoRe-Bench}}
\def\oursmonobench{{\scshape LoRe-Mono}}
\def\ourscompobench{{\scshape LoRe-Compo}}
\iclrfinalcopy 
\begin{document}

\maketitle

\begin{abstract}
Despite the superior performance of Large Reasoning Models (LRMs), their reasoning behaviors are often counterintuitive, leading to suboptimal reasoning capabilities. To theoretically formalize the desired reasoning behaviors,
this paper presents the Laws of Reasoning (\ours), a unified framework that characterizes intrinsic reasoning patterns in LRMs. We first propose \textit{compute law} with the hypothesis that the reasoning compute should scale linearly with question complexity. Beyond compute, we extend \ours\space with a supplementary \textit{accuracy law}. Since the question complexity is difficult to quantify in practice, we examine these hypotheses by two properties of the laws, \textit{monotonicity} and \textit{compositionality}. We therefore introduce \oursbench, a benchmark that systematically measures these two tractable properties for large reasoning models. Evaluation shows that most reasoning models exhibit reasonable monotonicity but lack compositionality. In response, we develop an effective finetuning approach that enforces compute-law compositionality. Extensive empirical studies demonstrate that better compliance with compute laws yields consistently improved reasoning performance on multiple benchmarks, and uncovers synergistic effects across properties and laws. Project page: 
\href{lore-project.github.io}{\url{https://lore-project.github.io/}}.

{\let\thefootnote\relax
\footnotetext{\hspace*{-0.45em}$^*$Equal contribution. \quad
$^\dagger$Equal mentorship. \quad
$^\ddagger$Work partially done during the internship at MIT.\\ 
\hspace*{1.9em}Correspondence to: \texttt{junyuz6@illinois.edu, ppliang@mit.edu, huan@huan-zhang.com.}}}

\end{abstract}

\section{Introduction}

Large Reasoning Models (LRMs) such as OpenAI o1~\citep{OpenAIO1_24} have demonstrated unprecedented progress in approaching human-like reasoning capabilities. Despite their strong performance on solving complex problems, even powerful LRMs exhibit \textit{abnormal behaviors} that deviate from typical human reasoning patterns. Human generally adapt their thinking based on problem complexity~\citep{HumanProblemSolve1972}. In contrast, as illustrated in Fig.~\ref{fig:intro-fig}, DeepSeek-R1~\citep{DeepSeekR125} tends to generate longer reasoning but with a lower accuracy on a simpler sub-problem\footnote{For each problem, we generated multiple samples to account for randomness.}. 

We also identify this unexpected phenomenon across a wide range of reasoning models, as shown in Fig.~\ref{fig:compo_quali}. This is primarily because researchers generally overlook the high variability of Chain-of-Thought (CoT)~\citep{CoT22} data during the training phase. These CoT data are heuristically curated by human annotators or generated through online rollout~\citep{PPO17,DeepseekMath24}, rarely constrained by explicit rules, \textit{e.g.}, how much thinking budget to allocate for a given problem~\citep{CoTLength25}. Hence, the current training paradigm fails to guide models toward an optimal thinking strategy. It will lead to \textit{inefficient allocation of computation}—either overthinking~\citep{OverThinkingO1Like24,StopOverthinking25} or underthinking~\citep{BetweenUnderThinkingandOverthinking25,ThinkingOptimalScaling25,UnderthinkingO1Like25}, which in turn harms the performance~\citep{ChainofThoughtless24, HiddenRiskReason25}.

To overcome this limitation, one line of work focuses on adaptive \textit{post-training} techniques, including supervised fine-tuning with variable-length CoT~\citep{L125, Kimi15}. Another line of work modulates reasoning at \textit{test time}~\citep{S1_25,CyclicReflex25,AlphaOne25}. While many attempts have been made to control reasoning, existing approaches primarily rely on ad-hoc heuristics and still behave undesirably in our studies. Therefore, beyond empirical methods, several key challenges remain: \textbf{(Q1)} \textit{Can we theoretically formalize model reasoning to ensure desirable behavior?} (Section \ref{sec:theory}) \textbf{(Q2)} \textit{How can we evaluate whether popular LRMs follow these proposed principles?} (Section \ref{4-benchmark}) \textbf{(Q3)} \textit{Does enforcing these principles further improve general reasoning capabilities?} (Section \ref{sec:align}, \ref{sec:results})

To fill this gap, we introduce \textbf{the Laws of Reasoning} (\ours), which systematically formalize the relationship
between complexity and model reasoning behaviors in LRMs. The \ours\space framework comprises a core \textit{compute law} and a complementary \textit{accuracy law}. Given the practical challenges of measuring these hypotheses, the two fundamental laws are approximated via two tractable properties of optimal reasoning models, monotonicity and compositionality.

We then evaluate whether current LRMs follow the laws by developing \oursbench, \textbf{a comprehensive benchmark} that examines monotonicity and compositionality in LRMs. While \oursmonobench\space is a curated benchmark across diverse domains for monotonicity, \ourscompobench\space is constructed from MATH500~\citep{MATH500_24} to measure compositionality. Our evaluation shows that current models exhibit reasonable monotonicity but \emph{lack compositionality}, even for competitive baselines. 

In response, we propose \textbf{a simple yet effective fine-tuning approach} to enforce the compute-law compositionality. From validation experiments, we present three key insights: (1) the compositionality of reasoning compute can be greatly improved with \emph{simple fine-tuning} approach;
(2) Enforcing compositionality \textit{generally leads to better reasoning capability};
(3) \textit{Synergistic effects} emerge, yielding broader improvements across different properties and laws. 

\begin{figure}[t]
    \centering
    \includegraphics[width=\linewidth]{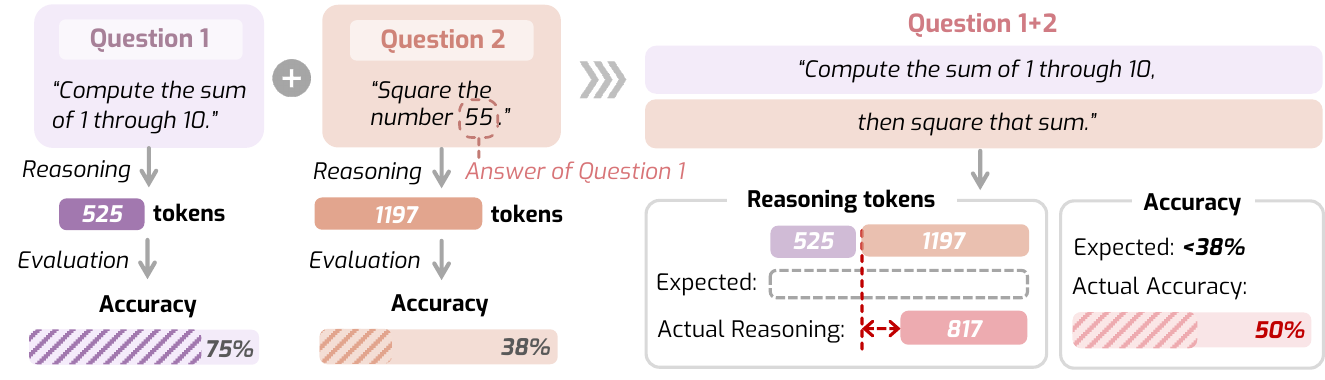}
    \caption{\textbf{Illustrative example with DeepSeek-R1} on (a) a summation question, (b) a squaring question, and (c) their composition (``sum, then square''). The model allocates \textasciitilde300 \textit{more} reasoning tokens to solve the squaring question than to the composite question, with a 12.5\% \textit{accuracy drop}. The mismatch with human reasoning reveals an abnormal reasoning pattern present in current LRMs.}
    \label{fig:intro-fig}
    \vspace{-0.5cm}
\end{figure}
\vspace{-0.15cm}

\section{The Laws of Reasoning}
\label{sec:theory}

We introduce the Laws of Reasoning (\ours), a unified framework that formalizes the relationship between question complexity and model reasoning behaviors. Specifically, we focus on two key aspects, reasoning compute and accuracy, which are fundamental to understanding how models scale, generalize, and allocate computation budget when solving complex problems. Section~\ref{law:notation} formulates the key concepts of reasoning. In Section~\ref{law:compute_law}, we present the central \textit{compute law}, with a hypothesis that the reasoning budget should scale proportionally with question complexity. In Section~\ref{law:accuracy_law}, we introduce the complementary \textit{accuracy law}, which posits that overall accuracy should decay exponentially with increasing complexity. See Fig.~\ref{fig:law-fig} for an illustration of the overall framework.

\vspace{-0.08cm}

\subsection{Problem Formulation}\label{law:notation}
\paragraph{Notation.} 
Let $x\in\mathcal{X}\subseteq\mathcal{V}^{\ast}$ denote a question, where $\mathcal{V}^\ast$ is the space of finite-length sequences over a vocabulary $\mathcal{V}$. Let $M_\theta\in\mathcal{M}$ denote an \emph{autoregressive} large reasoning model. LRMs adopts the \textit{thinking-then-answering} paradigm~\citep{DeepSeekR125, Phi4-Reasoning, Gemini25}, where the model $M_\theta$ first generates a reasoning chain $r\in\mathcal{R}\subseteq\mathcal{V}^{\ast}$ with probability $p_{\theta}(r\mid x)$ and then an answer $y\in\mathcal{Y}\subseteq\mathcal{V}^{\ast}$ with probability $p_{\theta}(y\mid x,r)$. We assume a fixed decoding strategy by default and denote the model’s output by $o\!=\!(r,y)\in\mathcal{O}\subseteq\mathcal{V}^{\ast}$. We define the composition of two questions $x_1$ and $x_2$ as their concatenation with a connector prompt $c \in \mathcal{V}^\ast$\footnote{One example of $c$ can be \texttt{“Answer the following questions in order: Q1. \{Q1\}$\backslash$nQ2. \{Q2\}”}.}, \ie{}, $x_1 \oplus x_2 = \mathrm{concat}(x_1, c, x_2)$. 

\begin{definition}[Complexity]\label{def:complexity} 
    Let a \emph{unit-cost primitive step} denote a single valid transition of a fixed deterministic Turing machine~\citep{TuringMachine}, and let $\tau$ be any finite sequence of primitive steps with length $\ell(\tau)\in\mathbb{N}$.
    Let $v(x,\tau)\in\{0,1\}$ be a binary verifier that accepts $(x,\tau)$ if and only if $\tau$ is a valid solution sequence for $x$.
    The \emph{complexity} of $x\in\mathcal{X}$ is
    \begin{equation*}
    \kappa(x)\;\triangleq\;\min\{\,\ell(\tau):\ v(x,\tau)=1\,\}\ \in\ \mathbb{N}\cup\{\infty\},
    \end{equation*}
    with $\kappa(x)=\infty$ if no valid solution sequence exists.
\end{definition}

Here the complexity refers to the \textit{minimal number of unit-cost primitive steps}. Conceptually, $\kappa(x)$ can be well-defined via a binary verifier and a fixed deterministic Turing machine. However, computing $\kappa(x)$ is generally intractable, as verifying the \textit{minimal} solution length requires a global search over a potentially exponential space.

For a given model, its test-time reasoning compute is directly proportional to the number of reasoning tokens generated. We therefore quantify reasoning compute as follows.

\begin{definition}[Reasoning Compute]\label{def:compute} The \emph{reasoning compute} on question $x$ is defined as the expected number of reasoning tokens generated by the model
\begin{align*}
    C_\theta(x) \triangleq \mathbb{E}_{r \sim p_\theta(\cdot \mid x)}[\ell(r)],
\end{align*}
where $\ell(r)$ denotes the length (in tokens) of the reasoning chain $r$.
\end{definition}

\begin{definition}[Reasoning Accuracy]\label{def:accuracy}
The \emph{reasoning accuracy} is defined as the probability that the model, when generating a reasoning chain and an answer given input $x$, produces a final answer that matches the ground truth. Formally,
\begin{align*}
A_{\theta}(x) \triangleq \mathbb{E}_{(r,y)\sim p_\theta(\cdot \mid x)}\left[\mathbf{1}\left\{ \operatorname{ans}(y) = a^\star(x) \right\}\right] \in [0,1].
\end{align*}
where $a^\star(x)$ denotes the correct answer to $x$, and $\operatorname{ans}(y)$ extracts the final answer from $y$.
\end{definition}

\begin{figure}[t]
    \centering
    \includegraphics[width=\linewidth]{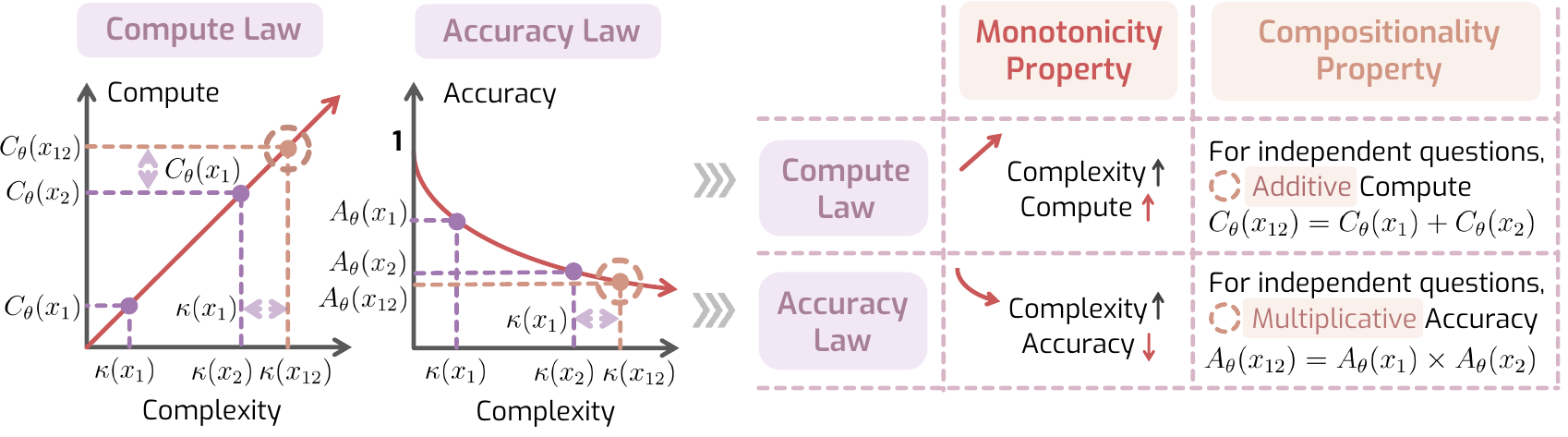}
    \caption{\textbf{Overview of the \textsc{LoRe} Framework.} We present the compute law with the complementary accuracy law. These laws posit that compute scales linearly \raisebox{-0.25em}{\includegraphics[height=1.0em]{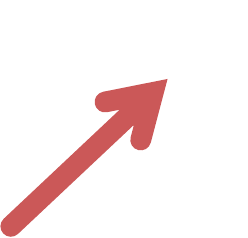}} and accuracy decays exponentially \raisebox{-0.25em}{\includegraphics[height=1.0em]{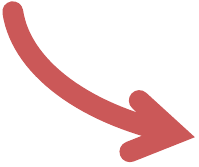}} with question complexity. Our framework approximates these laws using two properties: \textit{monotonicity} and \textit{compositionality}. Specifically, for the compute law, monotonicity captures that more complex questions require more compute, while compositionality indicates that for two independent questions, the compute for their composition \raisebox{-0.2em}{\includegraphics[height=1.0em]{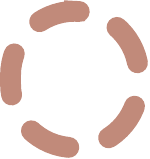}} is the sum of solving each individually.
    }
    \label{fig:law-fig}
    \vspace{-0.5cm}
\end{figure}

\subsection{Compute Law}\label{law:compute_law}

We hypothesize that, if a reasoning model allocates its reasoning compute efficiently, the amount of compute is expected to scale proportionally with complexity in approximation, \ie{}, $C_\theta(x) \propto \kappa(x)$:

\begin{restatable}[Compute Law]{hypothesis}{ComputeLaw}\label{def:compute_general}
    For an optimal reasoning model $M_\theta$ and a question $x$ with complexity $\kappa(x)$, there exist \(\alpha_\theta>0\) with,
    \begin{equation*}
    C_\theta(x) = \alpha_\theta\,\kappa(x) + o(\kappa(x)),
    \end{equation*}
    for some \(\alpha_\theta>0\) that depends only on $M_{\theta}$ and the decoding strategy. \(o(\kappa(x))\) denotes a small systematic overhead that is sublinear, \textit{i.e.}, $o(\kappa)/\kappa \to 0$ when $\kappa\to\infty$. 
\end{restatable}

Specifically, the \(o(\kappa(x))\) term captures the introductory and transition tokens during the reasoning process. These tokens generally constitute a very small portion of the overall reasoning and can therefore be ignored in practice.

\textbf{Two Tractable Alternative Properties as Proxies.} As discussed in Definition~\ref{def:complexity}, the complexity $\kappa(x)$ is difficult to measure in practice. Consequently, empirically validating the linear relationship is nontrivial, as it would require known complexity values for individual questions. To address this, we adopt two tractable properties as empirical proxies for studying the laws: \textit{monotonicity} and \textit{compositionality}. These properties offer two key advantages: (i) they are tractable to verify without access to the exact value of $\kappa(x)$. Monotonicity relies only on relative comparisons between questions, while compositionality tests whether compute is additive over independent question pairs;
 (ii) they are theoretically sufficient to imply the proposed compute law (Proposition~\ref{def:compute_combine}).

\begin{property}[Compute-Complexity Monotonicity]\label{pro:compute_mono}
    For $\,x_1,x_2\in\mathcal{X}$, the reasoning compute is \textit{monotonically non-decreasing} with complexity:
    \begin{align*}
        \kappa(x_1)\le \kappa(x_2)\ \Longrightarrow\
    C_{\theta}(x_1)\le C_{\theta}(x_2).
    \end{align*}

\end{property}

\begin{definition}[Independence]\label{def:independence} For $x_1,x_2\in\mathcal{X}$, $x_1$ and $x_2$ are \emph{independent} if the complexity of their composition is additive, \textit{i.e.}, $\kappa(x_1 \oplus x_2) = \kappa(x_1) + \kappa(x_2).$
\end{definition}
In practice, since the exact complexity values are difficult to obtain, we define independence operationally. Suppose each question \(x \in \mathcal{X}\) is associated with a set of mathematical concepts\footnote{For example, concepts may come from Calculus (\textit{e.g.}, derivatives), Algebra (\textit{e.g.}, group theory), or Discrete mathematics (\textit{e.g.}, logic).} \(\mathcal{S}(x) \subseteq \mathcal{S}\) relevant to solving it. We consider two questions \(x_1\) and \(x_2\) to be independent if their concept sets are disjoint, i.e., $\mathcal{S}(x_1) \cap \mathcal{S}(x_2) = \varnothing$.

\begin{property}[Compute-Complexity Compositionality]\label{pro:compute_compo}
    For $\,x_1,x_2\in\mathcal{X}$, if \(x_1\) and \(x_2\) are independent, their composite $x_1 \oplus x_2$ exhibits \textit{additive compute}:
    \begin{align*}
        C_\theta(x_1 \oplus x_2) = C_\theta(x_1) + C_\theta(x_2) + o(\kappa(x_1) + \kappa(x_2)),
    \end{align*}
where the sublinear terms accounts for systematic overhead in the reasoning process (as assumed in Hypothesis~\ref{def:compute_general}). Therefore, the reasoning compute is \emph{approximately additive}:
    \begin{align*}
    C_\theta(x_1 \oplus x_2) \approx C_\theta(x_1) + C_\theta(x_2).
    \end{align*}

\end{property} 
\noindent
\textbf{Discussion.} Intuitively, these properties are motivated by two basic principles: (i) more complex questions naturally require more reasoning; (ii) two independent sub-questions involve no overlapping reasoning, so the total compute is the sum of solving each one individually.  In the next proposition, we state informally that these properties imply the compute law (Hypothesis~\ref{def:compute_general}); a formal proof is provided in Appendix~\ref{appx:linear-proof}. These tractable properties thus offer a practical means to evaluate whether current LRMs follow the compute law.

\begin{proposition}\label{def:compute_combine}
    Under certain conditions, if a reasoning model $M_\theta$ satisfies compute-complexity monotonicity and compositionality, then its reasoning compute $C_\theta(x) \propto \kappa(x)$ for \(x\in\mathcal{X}\).
\end{proposition}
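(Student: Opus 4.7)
The plan is to reduce the claim to an approximate Cauchy equation on a scalar function of complexity and then combine monotonicity with a Fekete-style sandwich to obtain asymptotic linearity. First I would use Property~\ref{pro:compute_mono} to show that $C_\theta(x)$ depends on $x$ only through $\kappa(x)$: applying the monotonicity inequality in both directions to any pair $x_1,x_2$ with $\kappa(x_1)=\kappa(x_2)$ forces $C_\theta(x_1)=C_\theta(x_2)$, so there exists a non-decreasing function $F:\kappa(\mathcal{X})\to\mathbb{R}_{\ge 0}$ with $C_\theta(x)=F(\kappa(x))$.

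Next, I would translate Property~\ref{pro:compute_compo} into an approximate Cauchy relation for $F$. Whenever a pair $(n_1,n_2)$ is realized by an independent pair of questions, Definition~\ref{def:independence} gives $\kappa(x_1\oplus x_2)=n_1+n_2$, and compositionality rewrites as $F(n_1+n_2)=F(n_1)+F(n_2)+g(n_1+n_2)$ with $g(N)/N\to 0$. I would codify the ``certain conditions'' of the proposition as (i) $\kappa(\mathcal{X})$ contains all sufficiently large integers, and (ii) every such integer admits balanced splittings that are realized by independent pairs, so the approximate Cauchy equation extends to a rich enough subset of $\mathbb{N}$.

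Then I would run a Fekete-style convergence argument. Fix a base complexity $k_0\in\kappa(\mathcal{X})$; using balanced (binary-tree) compositions rather than linear chaining of $m$ independent copies of a base question, the total overhead is bounded by $\sum_{\ell}(m/2^{\ell+1})\,g(2^{\ell+1}k_0)$, which is $o(mk_0)$ under the mild strengthening of sublinearity absorbed into the ``certain conditions''. This gives $F(mk_0)/(mk_0)\to F(k_0)/k_0$; monotonicity then sandwiches $F(n)$ between $F(mk_0)$ and $F((m+1)k_0)$ for $mk_0\le n<(m+1)k_0$, so $F(n)/n$ converges to the same limit. Comparing two distinct base complexities $k_0,k_0'$ via additivity on $k_0+k_0'$ shows the limit is independent of the base, yielding a universal constant $\alpha_\theta>0$; positivity comes from monotonicity together with the fact that any nontrivial reasoning chain has positive expected length. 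Unpacking $F(n)=\alpha_\theta n+o(n)$ recovers $C_\theta(x)=\alpha_\theta\kappa(x)+o(\kappa(x))$, matching Hypothesis~\ref{def:compute_general}.

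The hard part is the third step: controlling the cumulative sublinear overhead across many iterated compositions, since naive linear chaining of $m$ compositions can amplify a per-merge $o(M)$ error into a $\Theta(m^2)$ accumulated error. I would handle this by balanced binary-tree composition combined with a mild quantitative strengthening of sublinearity (e.g., $g(M)=O(M/\log^{1+\varepsilon}M)$), which is essentially what the ``certain conditions'' clause buys. A secondary technical point is verifying the independence-richness assumption on the integer lattice of complexities without gaps, which I would justify by exploiting the combinatorial variety of disjoint-concept pairings available from multi-domain question pools.
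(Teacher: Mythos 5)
Your overall strategy is sound and arrives at the right conclusion, but it takes a genuinely different and substantially heavier route than the paper. The paper's proof proceeds in two stages. First, for the \emph{exact} version — where independent composition yields $C_\theta(x\oplus y)=C_\theta(x)+C_\theta(y)$ with no overhead and where assumption (A3) directly supplies jointly independent $m$-tuples all of the same complexity $u$ so that $f(mu)=m\,f(u)$ — the argument is purely algebraic: for any two positive complexities $u,v\in K$ set $\ell=\mathrm{lcm}(u,v)$ and observe $f(\ell)=(\ell/u)f(u)=(\ell/v)f(v)$, hence $f(u)/u=f(v)/v$ is a universal constant $\alpha_\theta$. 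There is no Fekete-style limit, no sandwiching, and no accumulation to control, precisely because (A3) hands over $m$-fold joint additivity in a single step rather than via $m-1$ iterated binary merges. The paper then \emph{states} (as a corollary, without proof) that the same argument "yields" the asymptotic version with sublinear overhead. Your proposal instead attacks the approximate version head-on, and in doing so you correctly identify the issue the paper glosses over: iterating binary compositionality with a per-merge $o(\cdot)$ error does accumulate, and the naive linear chain can blow up quadratically. Your fix — balanced binary-tree composition plus a quantitative strengthening of sublinearity of the form $g(M)=O(M/\log^{1+\varepsilon}M)$ — is in the spirit of the de~Bruijn--Erd\H{o}s refinement of Fekete's lemma, for which that integrability-type condition is essentially the right one. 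However, there is a technical imprecision worth flagging: for \emph{fixed} $k_0$, the tree overhead $\sum_{\ell}(m/2^{\ell+1})\,g(2^{\ell+1}k_0)$ is $\Theta(m k_0)$ with a small constant (it is $m k_0$ times a \emph{convergent} series in $\ell$), not $o(m k_0)$ as you claim. The correct conclusion is that $F(mk_0)/(mk_0)\to F(k_0)/k_0 + O(S(k_0))$, where $S(k_0)$ is the tail of the convergent series and vanishes as $k_0\to\infty$; one must then take $k_0$ large together with $m$, or equivalently run the two-sided Fekete argument directly, to extract a base-independent limit. So your outline leads to a valid proof of both the exact and approximate versions, is more general than what the paper rigorously establishes, and surfaces a real gap in the paper's corollary, but stage~3 needs the $o(mk_0)$ claim repaired into the standard two-parameter Fekete/de~Bruijn--Erd\H{o}s limit exchange to be airtight.
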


\subsection{Beyond Compute: Accuracy Law}\label{law:accuracy_law}

Following Definition~\ref{def:complexity}, suppose a question requires solving \(\kappa(x)\) unit-cost primitive steps. If each step succeeds independently with a fixed probability and all steps must succeed for the final answer to be correct, then the overall accuracy is expected to decrease exponentially with \(\kappa(x)\). This intuition motivates the following formulation of the accuracy law:

\begin{hypothesis}[Accuracy Law]\label{def:accuracy_general}
    For an optimal reasoning model $M_\theta$ and a question $x$ with complexity $\kappa(x)$, when \(0<A_{\theta}(x)\leq1\), there exists $\lambda_\theta \ge 0$ with,
    \begin{equation*}
    A_{\theta}(x)\;=\;\exp\!\big(-\lambda_\theta\,\kappa(x)\big).
    \end{equation*}
    Equivalently,
    \(\log A_\theta(x)\propto -\kappa(x)\),
    where \(\lambda_\theta\ge 0\) is the decay rate.
   
\end{hypothesis}

Similar to the compute law, we assume that the reasoning accuracy for the optimal reasoning model $M_\theta$ also satisfies two fundamental properties: \textit{monotonicity} and \textit{compositionality}.

\begin{property}[Accuracy-Complexity Monotonicity]\label{pro:accuracy_mono}
    For $\,x_1,x_2\in\mathcal{X}$, the reasoning accuracy is \textit{monotonically non-increasing} with complexity:
    \begin{align*}
    \kappa(x_1)\le \kappa(x_2)\ \Longrightarrow\
    A_{\theta}(x_1)\ge A_{\theta}(x_2).
    \end{align*}

\end{property}

\begin{property}[Accuracy-Complexity Compositionality]\label{pro:accuracy_compo}
    For $\,x_1,x_2\in\mathcal{X}$, if \(x_1\) and \(x_2\) are independent, their composite $x_1 \oplus x_2$ exhibits \textit{multiplicative accuracy}:
    \begin{align*}
    A_{\theta}(x_1\oplus x_2)=A_{\theta}(x_1)\cdot A_{\theta}(x_2).
    \end{align*} 

\end{property} 
\noindent
\textbf{Discussion.} These properties are motivated by two basic principles: (i) more complex questions tend to have lower accuracy; (ii) for two independent questions with accuracies \(p_1\) and \(p_2\) (e.g., \(p_1 = 0.8\), \(p_2 = 0.7\)), the probability of correctly answering both should be \(p_1 \cdot p_2\) (e.g., \(0.56\)). We state below that these properties imply the accuracy law, with a formal proof provided in Appendix~\ref{appx:linear-proof}.

\begin{restatable}{proposition}{MainProp}\label{def:accuracy_combine}
    Under certain conditions, if a reasoning model $M_\theta$ satisfies accuracy-complexity monotonicity and compositionality, then its reasoning accuracy $\log A_\theta(x)\propto -\kappa(x)$ for \(x\in\mathcal{X}\).
\end{restatable}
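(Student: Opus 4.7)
The plan is to reduce the two assumed properties to the classical multiplicative Cauchy functional equation on $\mathbb{N}$, then take a logarithm to obtain the stated linear-in-$\kappa$ law. First I would use accuracy-complexity monotonicity (Property~\ref{pro:accuracy_mono}) to show that $A_\theta(x)$ depends only on the complexity of $x$: whenever $\kappa(x_1)=\kappa(x_2)$, applying the monotonicity inequality in both directions forces $A_\theta(x_1)=A_\theta(x_2)$. This lets me define a single-variable function $g:\mathbb{N}\to(0,1]$ by $g(n)\triangleq A_\theta(x)$ for any $x$ with $\kappa(x)=n$; the range lies in $(0,1]$ by the hypothesis $0<A_\theta(x)\le 1$, and $g$ is non-increasing.

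Next I would invoke accuracy-complexity compositionality (Property~\ref{pro:accuracy_compo}) together with Definition~\ref{def:independence}. The ``certain conditions'' that I would make explicit is a richness assumption on $\mathcal{X}$: for every pair $n,m\in\mathbb{N}$ there exist independent $x_1,x_2\in\mathcal{X}$ with $\kappa(x_1)=n$ and $\kappa(x_2)=m$. Under this assumption, Property~\ref{pro:accuracy_compo} combined with Definition~\ref{def:independence} yields the multiplicative Cauchy equation on the natural numbers,
\[
g(n+m)\;=\;A_\theta(x_1\oplus x_2)\;=\;A_\theta(x_1)\,A_\theta(x_2)\;=\;g(n)\,g(m).
\]
Solving this is standard: induction on $m$ with base case $m=1$ gives $g(n)=g(1)^{n}$ for all $n\in\mathbb{N}$. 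Setting $\lambda_\theta\triangleq-\log g(1)$, I get $\log g(n)=-\lambda_\theta n$, and monotonicity together with $g(1)\in(0,1]$ forces $\lambda_\theta\ge 0$. Substituting $n=\kappa(x)$ produces the claim $\log A_\theta(x)=-\lambda_\theta\,\kappa(x)$, matching Hypothesis~\ref{def:accuracy_general}.

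The hard part is justifying the richness condition that makes compositionality applicable at every $(n,m)$. Without it, the multiplicative identity is pinned down only on a sparse sub-lattice of complexity values, and one cannot extrapolate $g$ to all of $\mathbb{N}$. I would defend this assumption using the concept-based operationalization of independence in Definition~\ref{def:independence}: since disjoint concept sets can be freely combined, the set of achievable independent complexity pairs is closed under termwise addition and contains a natural generator, which suffices to propagate the identity. A secondary concern is handling $x$ with $\kappa(x)=\infty$ or $A_\theta(x)=0$; both are already excluded by the hypothesis $0<A_\theta(x)\le 1$, so I would simply restrict the above argument to the subdomain $\{x:\kappa(x)<\infty\}$ and note that the limiting case $\lambda_\theta\kappa(x)\to\infty$ is consistent with $A_\theta(x)\to 0$.
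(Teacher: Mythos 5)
Your proposal is substantively correct and reaches the same conclusion, but it works under a strictly stronger richness assumption than the paper and consequently solves a different functional equation. Both proofs begin identically: monotonicity applied in both directions shows $\kappa(x_1)=\kappa(x_2)\Rightarrow A_\theta(x_1)=A_\theta(x_2)$, so accuracy factors through a single-variable map $g$ defined on the set $K=\kappa(\mathcal X_{\mathrm{fin}})$ of attained complexities. The divergence is in what comes next. You posit that for \emph{every} pair $n,m\in\mathbb N$ there exist independent questions of complexities $n$ and $m$; that gives the full multiplicative Cauchy equation $g(n+m)=g(n)g(m)$ on $\mathbb N$, which you solve by induction from the generator $g(1)$. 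This requires, implicitly, that $K=\mathbb N$ (in particular $1\in K$). The paper's assumption (A3) is weaker: it only asks that, for each attained complexity $u\in K$, one can find $m$ jointly independent questions all of complexity $u$, yielding only the homogeneity relation $g(mu)=m\,g(u)$ rather than the full Cauchy equation. Since $K$ need not contain $1$ or be additively closed, there is no canonical generator to induct from; the paper instead shows $g(u)/u=g(v)/v$ for any $u,v\in K\setminus\{0\}$ via the $\operatorname{lcm}$ trick, and reads off $\lambda_\theta$ as that common ratio. The trade-off: your argument is more elementary and matches the textbook Cauchy-equation proof, but needs the domain of complexities to be all of $\mathbb N$; the paper's argument is slightly more involved but linearizes $g$ on whatever sparse subset $K$ of $\mathbb N$ happens to be realized, which better matches the operational definition of complexity where not every natural number need be a valid minimal solution length.
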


\section{Do Current LRMs Follow the Laws?}\label{4-benchmark}

In this section, we systematically evaluate whether current LRMs follow our proposed \textit{reasoning laws}. Specifically, we introduce \textsc{LoRe-Bench}, a two-fold benchmark that leverages two tractable properties, monotonicity and compositionality, to examine LRMs.

\subsection{LoRe-Mono} \label{lore-mono}

\begin{figure}[tbp]
    \centering
    \includegraphics[width=\linewidth]{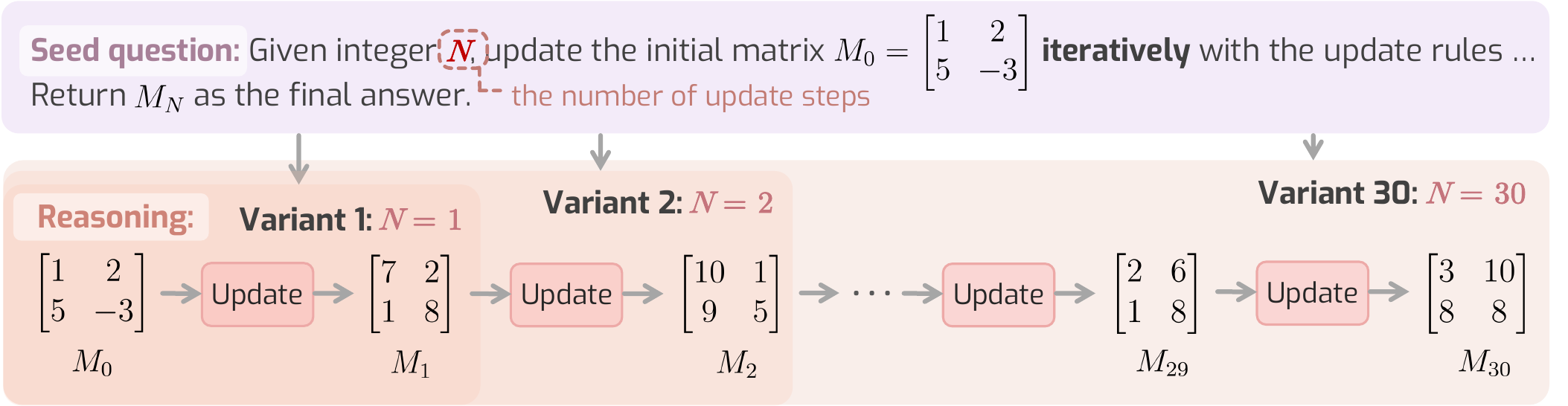}
    \caption{\textbf{Question Generation of \oursmonobench.} For each seed question, we generate 30 variants with increasing complexity. Specifically, variant $N$ applies the update rules $N$ times to compute the answer, so the question complexity increases monotonically with $N$.}
    \label{fig:lore-mono}
    \vspace{-0.45cm}
\end{figure}

Evaluating the monotonicity property in Property~\ref{pro:compute_mono} or Property~\ref{pro:accuracy_mono} requires comparing the complexity of arbitrary question pairs. However, due to its definition via minimal solution length, complexity is inherently difficult to quantify in practice. As a result, existing benchmarks are not suited for such analysis. To address this challenge, we construct \textsc{LoRe-Mono}, a synthetic benchmark where questions are carefully curated and validated to follow known complexity orderings, allowing us to systematically assess the monotonicity of reasoning compute and accuracy.

\noindent\textbf{(1) Seed Question Curation.} We select four domains that require extensive reasoning—\emph{math}, \emph{science}, \emph{language}, and \emph{code}—and curate 10 diverse seed questions for each. A seed question defines a problem template shared across its variants. 

\textbf{(2) From Seed Questions to Variants.} As shown in Fig.~\ref{fig:lore-mono}, for each seed question, we create a series of variants (30 in total) that become increasingly complex by requiring more steps to reach the final answer. For example, variant 1 requires one matrix operation, variant 2 requires two, and variant 30 requires thirty, with the identical operation applied repeatedly. By design, a larger number of steps directly corresponds to higher complexity. Note that this construction does not assume or require each operation to use the same compute.

\textbf{(3) Program-based Generation and Manual Verification.}
All variants are generated through Python scripts to ensure correctness and scalability. To prevent unintended shortcuts such as periodic patterns, we manually verify each seed question and review sampled variants. We provide detailed seed questions and variants for each domain in Appendix~\ref{app:lore-bench-q-examples}.

We use the Spearman correlation coefficients $\rho \in [-1, 1]$
to measure how the variant index, which directly determines the constructed question’s complexity, relates to two quantities: reasoning compute and log accuracy. A high correlation with reasoning compute indicates that compute grows monotonically with complexity (Property~\ref{pro:compute_mono}), while a negative correlation with log accuracy indicates that accuracy tends to degrade as complexity increases (Property~\ref{pro:accuracy_mono}).

\subsection{LoRe-Compo}\label{lore-compo}

In contrast, assessing compositionality is more straightforward: it only requires taking any two independent questions as sub-questions and constructing their composition. We build \textsc{LoRe-Compo} from MATH500~\citep{MATH500_24}, where each question is labeled by subject (e.g., Algebra, Geometry). Specifically, we randomly sample a pair of questions $(x_1, x_2)$ from \textit{distinct pre-defined subjects} to ensure independence, and concatenate them into a composite question $x_{12}$. Each original question is used at most once, yielding $250$ triplets, each with two sub-questions and their composition: $\mathcal{D}_{\text{LoRe-Compo}} = \{ (x_1^{(i)}, x_2^{(i)}, x_{12}^{(i)}) \}_{i=1}^{250}$. Recall that for a function $f_\theta(\cdot)$ (either $C_\theta(\cdot)$ or $\log A_\theta(\cdot)$), compositionality implies that
$f_\theta(x_{12}) \approx f_\theta(x_1) + f_\theta(x_2)$.
We therefore quantify the degree to which a model follows this property using the mean absolute deviation ($\mathrm{MAD}$):
\[
\mathrm{MAD}_f
= \sum_{(x_1, x_2, x_{12}) \in \mathcal{D}_{\text{LoRe-Compo}}}
\left| f_\theta(x_{12}) - \bigl(f_\theta(x_1)+f_\theta(x_2)\bigr) \right|
\]
A smaller MAD indicates stronger adherence to the compositionality property. However, MAD is scale-dependent. To address this, we adopt the Normalized MAD ($\mathrm{nMAD}$): \[
\mathrm{nMAD}_f = \frac{\mathrm{MAD}_f}{S_f}, 
\quad
S_f = \sum_{(x_1, x_2, x_{12}) \in \mathcal{D}_{\text{LoRe-Compo}}}
\left| f_\theta(x_1)+f_\theta(x_2)\right|.
\]

\subsection{Findings and Analysis}\label{lore-result}

\paragraph{Evaluation Setups.}
We examine $10$ LRMs on \textsc{LoRe-Mono} and \textsc{LoRe-Compo}: eight standard models—
DeepSeek-R1-Distill (Qwen-1.5B, Qwen-7B, Llama-8B, Qwen-14B)~\citep{DeepSeekR125}, Phi-4-mini-reasoning~\citep{xu2025phi}, OpenReasoning-Nemotron-14B~\citep{OpenCodeReasoning}, Sky-T1-32B-Preview~\citep{SkyT1} and Qwen3-Next-80B-A3B-Thinking~\citep{Qwen3Report}—and two models that apply reasoning length control, Thinkless-1.5B-RL-DeepScaleR~\citep{fang2025thinkless} and AdaptThink-7B-delta0.05~\citep{zhang2025adaptthink}. For each question, we sample 8 outputs per model with a fixed decoding temperature (0.6 for the DeepSeek family and 0.8 for the Phi-4 family from their technical reports) and a maximum length of 20480 tokens. For \textsc{LoRe-Mono}, at each variant index we first average reasoning compute\footnote{We apply max–min normalization to the reasoning compute of each question to prevent any single item from dominating the results.} and log accuracy across the $40$ questions, and then compute the Spearman correlation.

\begin{table*}[ht!]
\vspace{-0.15cm}
\caption{\textbf{Monotonicity Results on \textsc{LoRe-Mono}.} We examine whether reasoning compute and log accuracy of $6$ popular LRMs satisfy the monotonicity property across four domains. Spearman correlations are reported for reasoning compute and log accuracy. \textit{Lang.} stands for \textit{Language}.}
\label{table:mono-before-sft}
\centering
\renewcommand{\arraystretch}{1.15}
\setlength{\tabcolsep}{1.5mm}
\resizebox{1\textwidth}{!}{
\begin{tabular}{cccccccccccc}
\toprule[0.95pt]
\multirow{2}{*}[-0.7ex]{\textbf{Size}} & \multirow{2}{*}[-0.7ex]{\textbf{Models}} & \multicolumn{5}{c}{\textbf{Reasoning Compute $\uparrow$}} & \multicolumn{5}{c}{\textbf{Log Accuracy $\downarrow$}} \\
\cmidrule(lr){3-7}\cmidrule(lr){8-12}
& & \textbf{Math} & \textbf{Science} & \textbf{Lang.} & \textbf{Code} & \textbf{All} & \textbf{Math} & \textbf{Science} & \textbf{Lang.} & \textbf{Code} & \textbf{All}\\
\midrule[0.6pt]

\multirow{2}{*}{\textbf{1.5B}} 
& DeepSeek-R1-1.5B   & 0.861 & 0.910 & \cellcolor[RGB]{250,235,215}-0.346 & \cellcolor[RGB]{250,235,215}0.151 & \cellcolor[RGB]{250,235,215}\textbf{0.875} & -0.795 & -0.864 & \cellcolor[RGB]{250,235,215}-0.210 & \cellcolor[RGB]{250,235,215}-0.487 & \cellcolor[RGB]{250,235,215}\textbf{-0.868}\\
& Thinkless-1.5B     & 0.943 & 0.961 & 0.648  & 0.794 & \textbf{0.976} & -0.951 & -0.934 & -0.556 & -0.539 & \textbf{-0.960}\\
\midrule[0.4pt]

\textbf{3.8B} 
& Phi-4-mini & 0.980 & 0.973 & 0.936  & 0.922 & \textbf{0.988} & -0.965 & -0.802 & -0.911 & -0.822 & \textbf{-0.954}\\
\midrule[0.4pt]

\multirow{2}{*}{\textbf{7B}}
& DeepSeek-R1-7B      & 0.956 & 0.975 & 0.901  & 0.970 & \textbf{0.991} & -0.946 & -0.876 & -0.899 & -0.818 & \textbf{-0.978}\\
& AdaptThink-7B       & 0.984 & 0.995 & 0.950  & 0.984 & \textbf{0.995} & -0.963 & -0.949 & -0.904 & -0.888 & \textbf{-0.972}\\
\midrule[0.4pt]

\textbf{8B}
& DeepSeek-R1-8B      & 0.982 & 0.962 & 0.864  & 0.963 & \textbf{0.988} & -0.944 & -0.796 & -0.924 & -0.843 & \textbf{-0.947}\\

\midrule[0.4pt]
\multirow{2}{*}{\textbf{14B}}
& {Nemontron-14B}   & 0.964 & 0.976 & 0.917 & 0.970 & \textbf{0.993} & -0.778 & -0.751 & -0.793 & -0.911 & \textbf{-0.913} \\
& {DeepSeek-R1-14B} & 0.978 & 0.973 & 0.903 & 0.980 & \textbf{0.990} & -0.888 & -0.888 & -0.803 & -0.933 & \textbf{-0.981} \\
\midrule[0.4pt]
{\textbf{32B}} & {Sky-T1-32B}      & 0.988 & 0.982 & 0.711 & 0.874 & \textbf{0.967} & -0.939 & -0.876 & -0.783 & -0.860 & \textbf{-0.963} \\
\midrule[0.4pt]
{\textbf{80B}} & {Qwen3-80B-Next}  & 0.977 & 0.984 & 0.794 & 0.993 & \textbf{0.992} & -0.403 & -0.551 & -0.774 & -0.907 & \textbf{-0.973} \\

\bottomrule[0.95pt]
\end{tabular}
}
\vspace{-0.1cm}
\end{table*}

\begin{figure}[htbp]
    \centering
    \includegraphics[width=\linewidth]{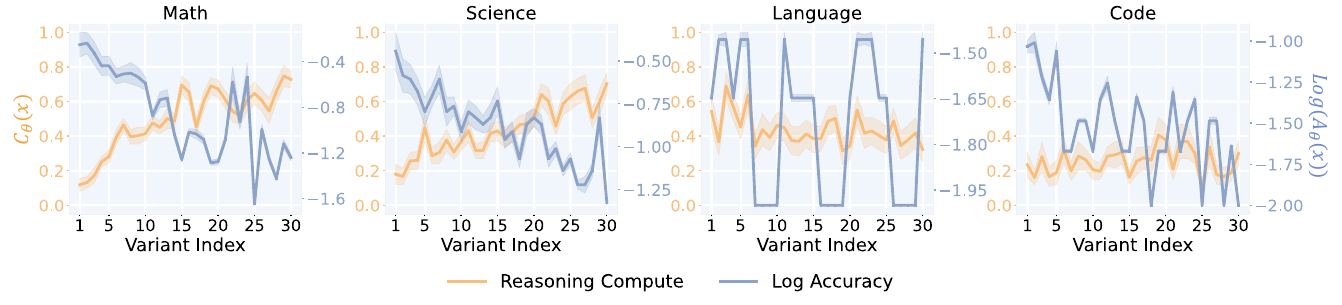}
    \caption{\textbf{Visualizations of Monotonicity Results on DeepSeek-R1-1.5B.} For each domain, we plot reasoning compute  and log accuracy as a function of variant index. The curves report the mean accuracy across 10 questions series, and the shaded regions denote the standard deviation.}
    \label{fig:mono_quali}
    \vspace{-0.1cm}
\end{figure}

\begin{figure}[h]
  \centering
  \begin{minipage}[t]{0.38\linewidth}
    \centering
    \vspace{0pt}
    \footnotesize
    \parbox{6.4cm}{
    \captionof{table}{\textbf{Compositionality Results on \textsc{LoRe-Compo}.} We calculate $\mathrm{nMAD}$ for reasoning compute ($C_\theta$) and log accuracy ($\log A_\theta$).}
    \label{tab:compo_results}}
    \resizebox{1.2\linewidth}{!}{
    \begin{tabular}{@{}ccc@{}}
      \toprule
      \textbf{Models} & $\text{nMAD}_{C_\theta} \downarrow$ & $\text{nMAD}_{\log A_\theta} \downarrow$ \\
      \midrule
      DeepSeek-R1-1.5B   & 0.528 & 2.368 \\
      Thinkless-1.5B & 0.339 & 0.694 \\
      Phi-4-mini     & 0.322 & 0.732 \\
      DeepSeek-R1-7B     & 0.337 & 1.170 \\
      AdaptThink-7B  & 0.327 & 0.791 \\
      DeepSeek-R1-8B     & 0.423 & 0.818 \\
      {DeepSeek-R1-14B} & 0.368 & 1.310\\
      {Nemotron-14B} & 0.354 & 1.188\\
      {Qwen3-80B-Next} & 0.411 & 1.487\\
      {Sky-T1-32B} & 0.392 & 1.900 \\
      \bottomrule
    \end{tabular}
    }
  \end{minipage}%
  \hfill
  \begin{minipage}[t]{0.5\linewidth} 
    \centering
    \vspace{0pt}
    \includegraphics[width=\linewidth]{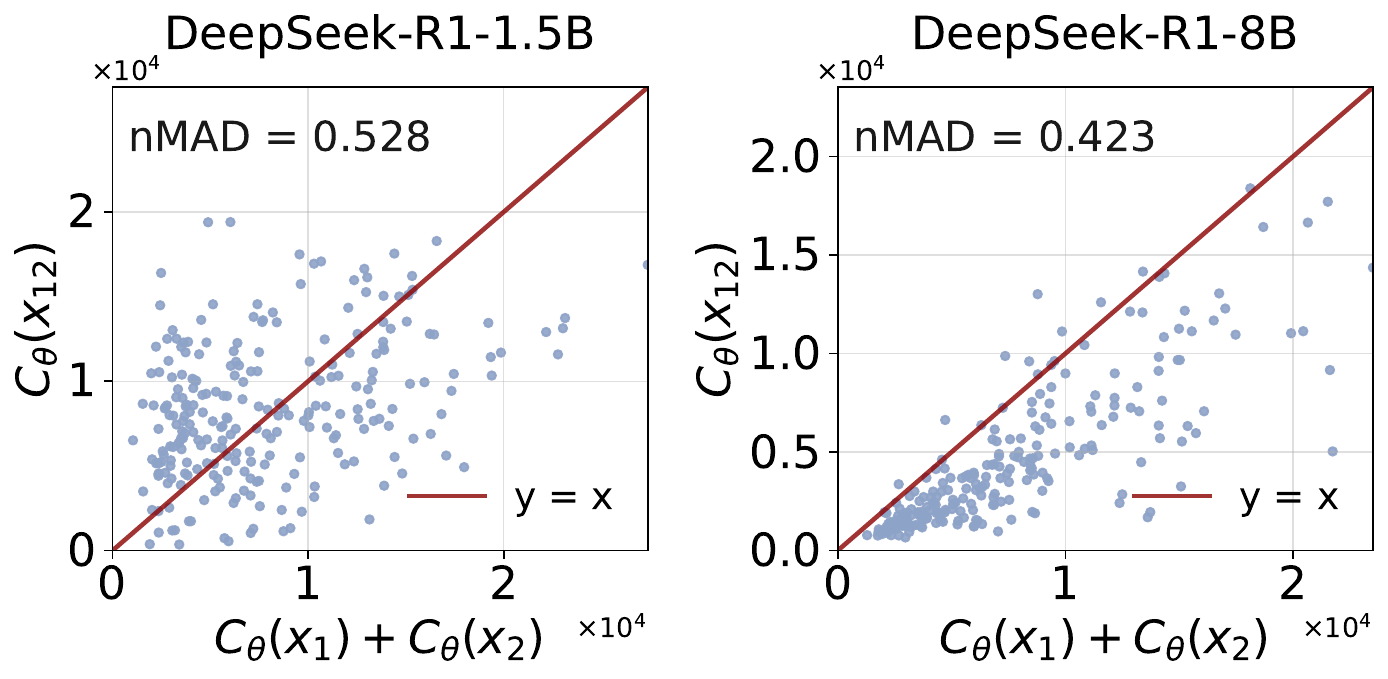}
    \caption{\textbf{Visualizations of Compositionality Results on Reasoning Compute.} We plot $C_{\theta}(x_1 \oplus x_2)$ against $C_{\theta}(x_1) + C_{\theta}(x_2)$. Further results are provided in Appendix~\ref{}.}
    \label{fig:compo_quali}
  \end{minipage}
  \vspace{-0.4cm}
\end{figure}

\paragraph{Current LRMs Largely Satisfy Monotonicity.} On \textsc{LoRe-Mono}, all LRMs exhibit a strong positive correlation between reasoning compute and the variant index, which directly reflects question complexity, with most overall Spearman correlations close to $1$, as shown in Tab.~\ref{table:mono-before-sft}. The only exception is DeepSeek-R1-Distill-Qwen-1.5B, which has the weakest reasoning ability among the six models and yields a lower overall correlation ($0.875$). As illustrated in Fig.~\ref{fig:mono_quali}, notably, in the language domain its correlation between reasoning compute and complexity is negative ($-0.346$), while in the code domain, it is near zero ($0.151$). This indicates that \textit{in some domains, the reasoning compute for this model does not systematically increase with complexity, and may even decrease}. We provide a case study as additional analysis along with visualization results for other models in Appendix~\ref{app:lore-add-results}. Meanwhile, most LRMs exhibit a negative correlation between log accuracy and the variant index, as expected. For DeepSeek-R1-Distill-Qwen-1.5B, however, this trend appears noticeably weaker.

\paragraph{Current LRMs Fail to Exhibit Compositionality.} 
The $\mathrm{nMAD}$ is large for both reasoning compute and log accuracy (Tab.~\ref{tab:compo_results}), indicating that current LRMs do not satisfy compositionality. Fig.~\ref{fig:compo_quali} further plots $C_{\theta}(x_1 \oplus x_2)$ against $C_{\theta}(x_1) + C_{\theta}(x_2)$ for two representative LRMs. If an LRM adhered to the compositionality law, most points would align closely with the $y=x$ line. In practice, however, the majority of points deviate substantially. Notably, even models equipped with reasoning length control mechanisms (Thinkless-1.5B and AdaptThink-7B) exhibit considerable deviations, suggesting that such techniques do not inherently promote compositional behavior.

\section{Improving Reasoning via Enforcing Compositionality} 
\label{sec:align}

In Section~\ref{4-benchmark}, we showed that while most LRMs generally satisfy \textit{monotonicity}, they often fail to satisfy \textit{compositionality}. Based on Hypothesis~\ref{def:compute_general}, this observation motivates a natural question: \textit{can enforcing compositionality lead to stronger reasoning capacity?} In response, we propose a simple yet effective supervised fine-tuning (SFT) method to promote compositional behavior in LRMs. Importantly, we focus on enforcing compositionality specifically with respect to \textit{reasoning compute}, as it provides a more direct and actionable criterion for selecting supervision examples.\footnote{Accuracy compositionality is not easy to enforce directly, as it does not specify which reasoning path should be selected for supervision.}

\paragraph{Proposed Method: \texttt{SFT-Compo}}Specifically, let \( M_\theta \) be an LRM and \( \mathcal{D}_{\text{train}} \) a training dataset. Following the construction in Section~\ref{lore-compo}, we select question pairs \( (x_1, x_2) \in \mathcal{D}_{\text{train}} \) from distinct categories and form composite questions \( x_{12} = x_1 \oplus x_2 \). For each triplet \( (x_1, x_2, x_{12}) \), we sample \( K \) model outputs \( o = (r, y) \in \mathcal{O} \) from an LRM (either the current model \( M_{\theta} \) or a stronger teacher model), where \( r \in \mathcal{R} \) is a reasoning path and \( y \in \mathcal{Y} \) is the corresponding final answer:
\[
\{ o_1^{(k)} = (r_1^{(k)}, y_1^{(k)}) \}_{k=1}^{K} \text{ for } x_1, \quad
\{ o_2^{(k)} = (r_2^{(k)}, y_2^{(k)}) \}_{k=1}^{K} \text{ for } x_2, \quad
\{ o_{12}^{(k)} = (r_{12}^{(k)}, y_{12}^{(k)}) \}_{k=1}^{K} \text{ for } x_{12}.
\]
Since compositionality is defined over reasoning paths, among the \( K^3 \) combinations \( (o_1, o_2, o_{12}) \), we consider only those where all three reasoning paths \( r_1, r_2, r_{12} \) lead to correct answers, and select the combination that best satisfies the compositionality condition:
\begin{align}
(r_1^*, r_2^*, r_{12}^*) = 
\mathop{\arg\min}_{r_1, r_2, r_{12}} 
\left| \ell(r_1) + \ell(r_2) - \ell(r_{12}) \right| \nonumber \\ 
\text{s.t. } r_1, r_2, r_{12} \text{ each yielding a correct final answer}.
\label{eq:compositional_gap}
\end{align}
Each triplet thus yields three supervised examples: \( (x_1, o_1^*) \), \( (x_2, o_2^*) \), and \( (x_{12}, o_{12}^*) \), where \( o_i^* = (r_i^*, y_i^*) \) with \( y_i^* \) the final answer paired with \( r_i^* \) in the sampled outputs. Aggregating across all triplets, we construct the compositional supervision dataset $\mathcal{D}_{\text{comp}} = \left\{ (x_1, o_1^*),\ (x_2, o_2^*),\ (x_{12}, o_{12}^*) \mid (x_1, x_2) \in \mathcal{D}_{\text{train}} \right\}.$
We then perform SFT on \( \mathcal{D}_{\text{comp}} \) to encourage \( M_\theta \) to internalize compositional reasoning behavior.

\section{Experiments}

\label{sec:results}
We now empirically evaluate \texttt{SFT-Compo}, addressing two research questions: (1) whether it effectively enforces compositionality, and (2) whether it further improves the reasoning capacity of LRMs. We also provide additional insightful findings in our analysis.

\subsection{Experimental Setup}
\label{sec:exp_setup}

\textbf{Model, Dataset and SFT Recipe.} 
We evaluate four LRMs: DeepSeek-R1-Distill (Qwen-1.5B, Qwen-7B, Llama-8B)~\citep{DeepSeekR125} and Phi-4-mini-reasoning~\citep{xu2025phi}. We construct a dataset of sub-question and composite-question triplets using a subset of DeepScaler~\citep{deepscaler2025}. For each question (either sub-question or composite), we use DeepSeek-R1-Distill-Qwen-14B as a stronger teacher model to sample $K=8$ model outputs. We then construct the compositionality-enforced dataset \( \mathcal{D}_{\text{comp}} \) as described in Eqn.~\ref{eq:compositional_gap}, which contains 3.9K question-output pairs. We fine-tune each LRM on \( \mathcal{D}_{\text{comp}} \) for 5 epochs with a batch size of 16. Details are provided in Appendix~\ref{appx:lore_bench_additional_results}.

\textbf{Evaluation.} 
To evaluate compositionality, we use \textsc{LoRe-Compo}. For general reasoning capacity, we consider six benchmarks: GSM8K~\citep{cobbe2021training}, MATH500~\citep{MATH500_24}, AIME 2024, AIME 2025~\citep{AIME25}, AMC 2023~\citep{AMC23}, and OlympiadBench~\citep{he2024olympiadbench}. We set the maximum generation length to 10240 tokens.

\subsection{Main Results}

\begin{figure}[H]
\vspace{-0.3cm}
    \centering
    \begin{subfigure}[b]{0.40\linewidth}
        \centering
        \includegraphics[width=0.9\linewidth]{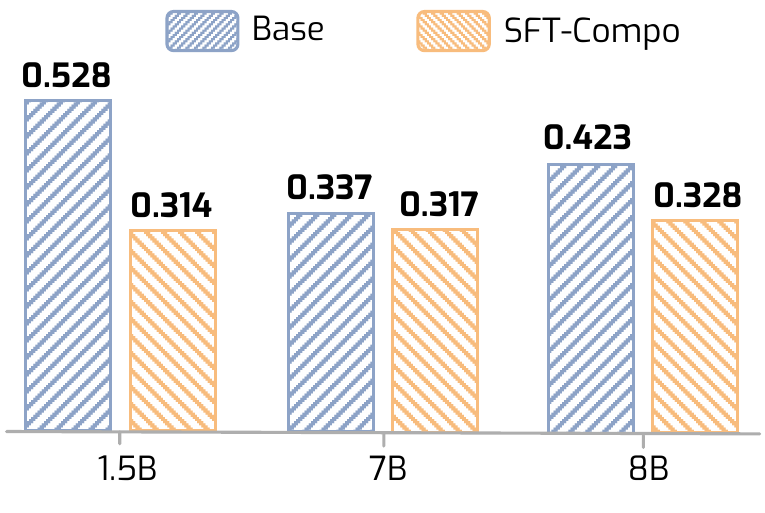}
        \caption{$\mathrm{nMAD}_{C_\theta}$ on \textsc{LoRe-Compo}.}
        \label{fig:compo_sft_1}
    \end{subfigure}
    \begin{subfigure}[b]{0.58\linewidth}
        \centering
        \includegraphics[width=\linewidth]{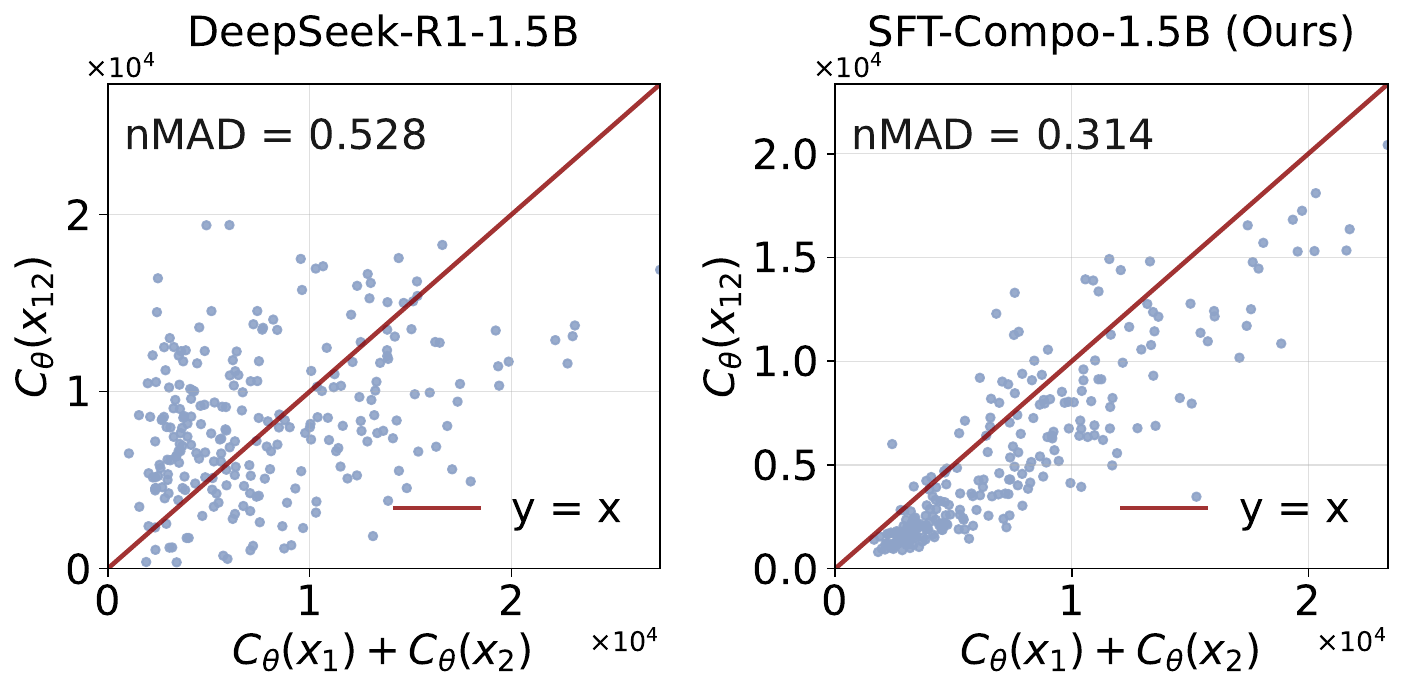}
        \caption{Visualizations of Reasoning Compute Compositionality.}
        \label{fig:compo_sft_2}
    \end{subfigure}
    \caption{\textbf{Comparison of Reasoning Compute Compositionality on \textsc{LoRe-Compo} for Base and \texttt{SFT-Compo} models.} \textbf{(a)} \texttt{SFT-Compo} consistently achieves a lower $\mathrm{nMAD}_{C_\theta}$ across 1.5B, 7B, and 8B models compared to the base model. 
    \textbf{(b)} We visualize $C_\theta(x_1 \oplus x_2)$ against $C_\theta(x_1) + C_\theta(x_2)$ for 1.5B models. \texttt{SFT-Compo} aligns more closely with the $y{=}x$ line than the base model.}
    \label{fig:compo_sft}
    \vspace{-0.4cm}
\end{figure}

\textbf{Does \texttt{SFT-Compo} Effectively Enforce Compositionality Compared to the Base Model?}
We compare LRMs before and after SFT using the $\mathrm{nMAD}$ of reasoning compute on \textsc{LoRe-Compo}. As shown in Fig.~\ref{fig:compo_sft_1}, \texttt{SFT-Compo} consistently reduces $\mathrm{nMAD}$ compared to the base model. On the 1.5B model, \texttt{SFT-Compo} achieves a reduction from $0.528$ to $0.314$ (a $40.5$\% reduction), and on the 8B model, from $0.423$ to $0.328$ (a $22.5$\% reduction). We further visualize the results on the 1.5B model in Fig.~\ref{fig:compo_sft_2}, where \texttt{SFT-Compo} aligns much more closely with the $y{=}x$ line. Therefore, \textit{the compositionality of reasoning compute can be effectively enforced in a simple manner via \texttt{SFT-Compo}}.

\textbf{Does Enforcing Compositionality Lead to Stronger Reasoning Capabilities?} As shown in Tab.~\ref{table:perfm-after-sft}, \texttt{SFT-Compo} consistently improves performance across all six benchmarks and all four model sizes. For instance, on the 8B model, it yields a notable gain of $+5.0$ in average Pass@1. To rule out the possibility that performance gains stem solely from leveraging outputs generated by a stronger teacher model, we introduce a control baseline, \texttt{SFT}, which constructs the training dataset by uniformly sampling one correct reasoning path for each question in the triplet:
\begin{equation*}
(r_1^*, r_2^*, r_{12}^*) \sim 
\text{Unif}\left(\{ (r_1, r_2, r_{12}) \mid r_1, r_2, r_{12} \text{ each yield a correct final answer} \}\right).
\end{equation*}
Notably, \texttt{SFT-Compo} outperforms \texttt{SFT} in all cases, showing that the gains are not just from distilling a stronger model but from better compliance with compositionality. This supports our Hypothesis~1—that \textit{stronger models better follow reasoning laws}—and demonstrate that \textit{encouraging compositionality further enhances the reasoning capabilities of LRMs.}

\begin{table*}[ht!]
\caption{\textbf{General Reasoning Evaluation Results}. We evaluate \texttt{Base} (pre-SFT), \texttt{SFT}, and \texttt{SFT-Compo} (Ours) on mathematical and science reasoning benchmarks. All numbers report Pass@1 accuracy (\%) computed over 8 sampled outputs.  $\overline{\text{Pass@1}}$ denotes the average across the six benchmarks. Numbers in \textbf{\textcolor{runpei-orange}{orange}} indicate improvements relative to the base model.}
\label{table:perfm-after-sft}
\centering
\renewcommand{\arraystretch}{1.15}
\setlength{\tabcolsep}{1.3mm}
\resizebox{1\textwidth}{!}{
\begin{tabular}{ccccccccc}
\toprule[0.95pt]
\multirow{2}{*}[-0.7ex]{\textbf{Base Model}} & \multirow{2}{*}[-0.7ex]{\textbf{Method}} & \multicolumn{5}{c}{\scshape\textbf{Math}} & \scshape\textbf{Science}  & \multirow{2}{*}[-0.7ex]{$\overline{\textbf{Pass@1}}$}\\
\cmidrule(lr){3-7}\cmidrule(lr){8-8}
&& \textbf{AIME24} & \textbf{AIME25} & \textbf{AMC23} & \textbf{MATH500} & \textbf{GSM8K} & \textbf{Olympiad} \\
\midrule[0.6pt]

\multirow{3}{*}{\textbf{DeepSeek-R1-1.5B}} 
& \texttt{Base} & 18.8 & 20.4 & 59.7 & 71.6 & 81.2 & 33.8 & 47.6\\
& \texttt{SFT} & 20.4$_{\text{+1.6}}$ & 21.5$_{\text{+1.1}}$ & 59.6$_{\text{-0.1}}$ & 76.4$_{\text{+4.8}}$ & 81.7$_{\text{+0.5}}$ & 36.1$_{\text{+2.3}}$ & 49.3$_{\text{+1.7}}$\\
& \texttt{SFT-Compo} (Ours) & \cellcolor[RGB]{250,235,215}\textbf{26.2}$_{\text{\textcolor{runpei-orange}{\textbf{+7.4}}}}$ & \cellcolor[RGB]{250,235,215}\textbf{21.7}$_{\text{\textcolor{runpei-orange}{\textbf{+1.3}}}}$ & \cellcolor[RGB]{250,235,215}\textbf{65.0}$_{\text{\textcolor{runpei-orange}{\textbf{+5.3}}}}$ & \cellcolor[RGB]{250,235,215}\textbf{77.6}$_{\text{\textcolor{runpei-orange}{\textbf{+6.0}}}}$ & \cellcolor[RGB]{250,235,215}\textbf{85.1}$_{\text{\textcolor{runpei-orange}{\textbf{+3.9}}}}$ & \cellcolor[RGB]{250,235,215}\textbf{38.7}$_{\text{\textcolor{runpei-orange}{\textbf{+4.9}}}}$ & \cellcolor[RGB]{250,235,215}\textbf{52.4}$_{\text{\textcolor{runpei-orange}{\textbf{+4.8}}}}$\\

\midrule[0.6pt]
\multirow{3}{*}{\textbf{DeepSeek-R1-7B}} 
& \texttt{Base} & 36.3 & 27.5 & 79.0 & 86.8 & 91.0 & 48.1 & 61.5\\
& \texttt{SFT} & 40.0$_{\text{+3.7}}$ & 32.5$_{\text{+5.0}}$ & 80.4$_{\text{+1.4}}$ & 88.0$_{\text{+1.2}}$ & \textbf{91.6}$_{\text{+0.6}}$ & 48.4$_{\text{+0.3}}$ & 63.5$_{\text{+2.0}}$\\
& \texttt{SFT-Compo} (Ours) & \cellcolor[RGB]{250,235,215}\textbf{43.3}$_{\text{\textcolor{runpei-orange}{\textbf{+7.0}}}}$ & \cellcolor[RGB]{250,235,215}\textbf{33.2}$_{\text{\textcolor{runpei-orange}{\textbf{+5.7}}}}$ & \cellcolor[RGB]{250,235,215}\textbf{80.6}$_{\text{\textcolor{runpei-orange}{\textbf{+1.6}}}}$ & \cellcolor[RGB]{250,235,215}\textbf{88.8}$_{\text{\textcolor{runpei-orange}{\textbf{+2.0}}}}$ & \cellcolor[RGB]{250,235,215}\textbf{91.6}$_{\text{\textcolor{runpei-orange}{\textbf{+0.6}}}}$ & \cellcolor[RGB]{250,235,215}\textbf{50.5}$_{\text{\textcolor{runpei-orange}{\textbf{+2.4}}}}$ & \cellcolor[RGB]{250,235,215}\textbf{64.7}$_{\text{\textcolor{runpei-orange}{\textbf{+3.2}}}}$\\
\midrule[0.6pt]
\multirow{3}{*}{\textbf{DeepSeek-R1-8B}}
& \texttt{Base} & 28.3 & 22.9 & 71.9 & 76.4 & 86.5 & 40.9 & 54.5\\
& \texttt{SFT} & 30.4$_{\text{+2.1}}$ & 24.2$_{\text{+1.3}}$ & 75.2$_{\text{+3.3}}$ & 82.6$_{\text{+6.2}}$ & 88.0$_{\text{+1.5}}$ & 44.7$_{\text{+3.8}}$ & 57.5$_{\text{+3.0}}$\\
& \texttt{SFT-Compo} (Ours) & \cellcolor[RGB]{250,235,215}\textbf{31.3}$_{\text{\textcolor{runpei-orange}{\textbf{+3.0}}}}$ & \cellcolor[RGB]{250,235,215}\textbf{29.2}$_{\text{\textcolor{runpei-orange}{\textbf{+6.3}}}}$ & \cellcolor[RGB]{250,235,215}\textbf{76.9}$_{\text{\textcolor{runpei-orange}{\textbf{+5.0}}}}$ & \cellcolor[RGB]{250,235,215}\textbf{83.0}$_{\text{\textcolor{runpei-orange}{\textbf{+6.6}}}}$ & \cellcolor[RGB]{250,235,215}\textbf{89.5}$_{\text{\textcolor{runpei-orange}{\textbf{+3.0}}}}$ & \cellcolor[RGB]{250,235,215}\textbf{46.8}$_{\text{\textcolor{runpei-orange}{\textbf{+5.9}}}}$ & \cellcolor[RGB]{250,235,215}\textbf{59.5}$_{\text{\textcolor{runpei-orange}{\textbf{+5.0}}}}$\\
\midrule[0.6pt]
\multirow{3}{*}{\textbf{Phi-4-mini}}
& \texttt{Base} & 32.5 & 24.6 & 71.9 & 86.4 & 91.4 & 50.0 & 59.5\\
& \texttt{SFT} & 40.0$_{\text{+7.5}}$ & 26.2$_{\text{+1.6}}$ & 73.4$_{\text{+1.5}}$ & \textbf{87.0}$_{\text{+0.6}}$ & 91.6$_{\text{+0.2}}$ & 49.5$_{\text{-0.5}}$ & 61.3$_{\text{+1.8}}$\\
& \texttt{SFT-Compo} (Ours) & \cellcolor[RGB]{250,235,215}\textbf{43.7}$_{\text{\textcolor{runpei-orange}{\textbf{+11.2}}}}$ & \cellcolor[RGB]{250,235,215}\textbf{31.7}$_{\text{\textcolor{runpei-orange}{\textbf{+7.1}}}}$ & \cellcolor[RGB]{250,235,215}\textbf{76.9}$_{\text{\textcolor{runpei-orange}{\textbf{+5.0}}}}$ & \cellcolor[RGB]{250,235,215}\textbf{87.0}$_{\text{\textcolor{runpei-orange}{\textbf{+0.6}}}}$ & \cellcolor[RGB]{250,235,215}\textbf{92.8}$_{\text{\textcolor{runpei-orange}{\textbf{+1.4}}}}$ & \cellcolor[RGB]{250,235,215}\textbf{51.2}$_{\text{\textcolor{runpei-orange}{\textbf{+1.2}}}}$ & \cellcolor[RGB]{250,235,215}\textbf{63.9}$_{\text{\textcolor{runpei-orange}{\textbf{+4.4}}}}$\\
\bottomrule[0.95pt]
\end{tabular}
}
\vspace{-0.5cm}
\end{table*}

\subsection{Synergistic Effect Analysis}

\begin{figure}[H]
    \centering
    \begin{subfigure}[t]{0.56\linewidth}
        \centering
        \includegraphics[width=\linewidth]{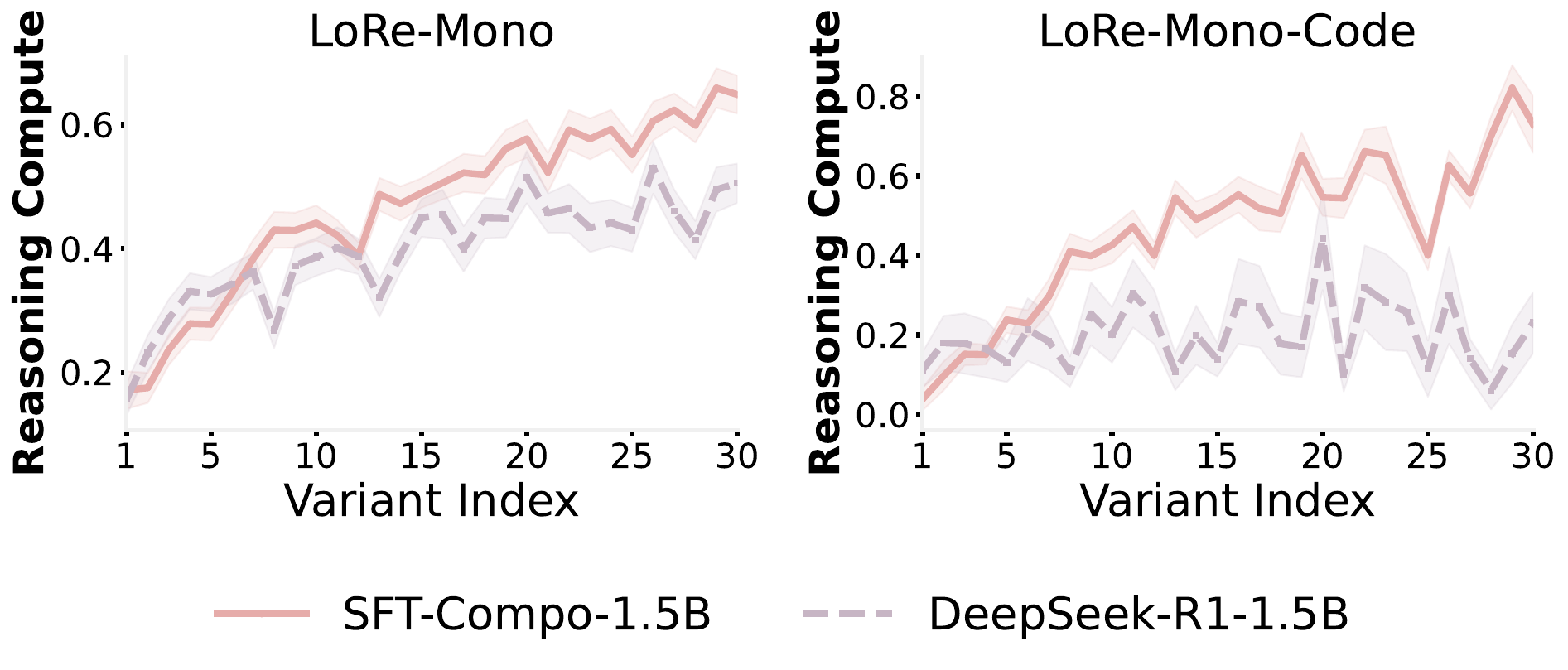}
        \caption{Visualizations of monoticity results on \textsc{LoRe-Mono} and the code domain for 1.5B models.}
        \label{fig:syn_1}
    \end{subfigure}
    \begin{subfigure}[t]{0.40\linewidth}
        \centering
        \includegraphics[width=0.9\linewidth]{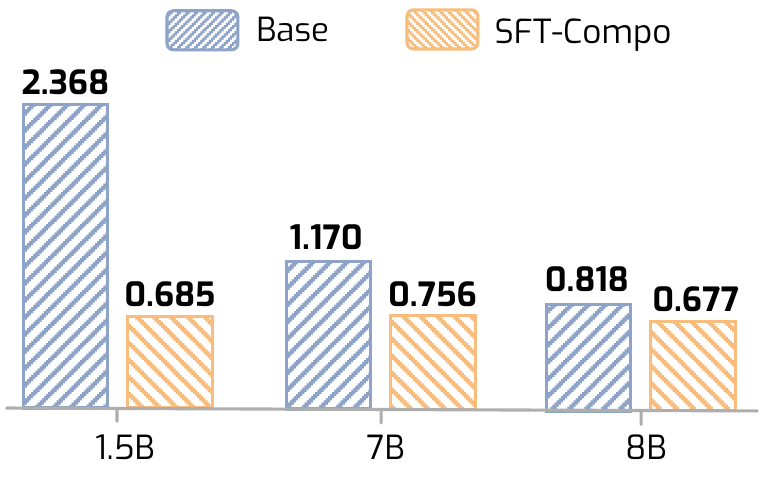}
        \caption{$\mathrm{nMAD}_{\log A_\theta}$ on \textsc{LoRe-Compo}.}
        \label{fig:syn_2}
    \end{subfigure}
    \caption{\textbf{Synergistic Effects Among Different Reasoning Properties and Laws.} \textbf{(a)} Enforcing compositionality in reasoning compute improves its monotonicity. \textbf{(b)} Enforcing compositionality in reasoning compute also improves the compositionality of log accuracy, measured by $\mathrm{nMAD}_{\log A_\theta}$.}
    \label{fig:syn}
    \vspace{-0.5cm}
\end{figure}

\textbf{Enforcing Compositionality in Reasoning Compute Improves Its Monotonicity.}
Recall from Section~\ref{lore-result} that DeepSeek-R1-Distill-Qwen-1.5B initially exhibits relatively weak monotonicity in reasoning compute. Fig.~\ref{fig:syn_1} shows \texttt{SFT-Compo} significantly improves this property, increasing the overall Spearman correlation from $0.875$ to $0.977$. Specifically, in the code domain, it rises from $0.151$ to $0.914$. This indicates that enforcing compositionality can implicitly enhance monotonicity.

\textbf{Enforcing Compositionality in Reasoning Compute Improves Compositionality in Accuracy.}
Interestingly, though \texttt{SFT-Compo} is designed to enhance compositionality in reasoning compute, it improves the compositionality of log accuracy. Fig.~\ref{fig:syn_2} shows that the $\mathrm{nMAD}$ of log accuracy drops from $2.368$ to $0.685$ on the 1.5B model (a $71.1$\% reduction), and from $1.170$ to $0.756$ on the 7B model (a $35.4$\% reduction). This suggests a possible interplay among different reasoning laws.

\section{Related Work}

LRMs have emerged as a family of foundation models~\citep{FoundationModel21}. Since the advent of OpenAI o1~\citep{OpenAIO1_24}, the ``thinking-then-answering'' paradigm has been widely adopted, with notable follow-ups such as DeepSeek-R1 and Phi-4-Reasoning~\citep{Phi4_24,DeepSeekR125,QwenQwQ25}. Our framework builds upon the contemporary paradigm of adaptive reasoning, wherein the model's reasoning budget is dynamically controlled either through post-training interventions~\citep{O1Pruner25,zhou2025mem1} or at test time~\citep{S1_25,ChainOfDraft25,AlphaOne25}. Specifically, one line of work explores post-training techniques that modulate when and how long a model should reason~\citep{ReasonBound24,ThinkInfo25}, while another frontier focuses on dynamically adjusting reasoning behavior during inference~\citep{ConCise25, AnswerEarly25}. Refer to Appendix~\ref{appx:related-works} for additional related work.

\section{Conclusions}

As a comprehensive study from theoretical hypotheses to empirical validation, we advance a theoretical perspective grounded in human reasoning for improving reasoning in LRMs. We hope \ours\space can inspire more potential strategies that guide models toward their optimal paradigms of thinking.

\section*{Acknowledgments}

The authors thank Heng Dong, Runpei Dong and Chanakya Ekbote for their insightful feedback and valuable suggestions on this project.

\section*{Ethics Statement}

This work does not raise any known ethical concerns.

\section*{Reproducibility Statement}

Our code repository is available at \url{https://github.com/ASTRAL-Group/LoRe}. In Section~\ref{lore-result} and Section~\ref{sec:exp_setup}, we provide a detailed description of the experimental setup, including dataset, models, training and evaluation procedures. Additional implementation details can be found in Appendix \ref{app-imple}.


\bibliography{main}
\bibliographystyle{iclr2026_conference}

\newpage
\appendix
\clearpage

\section{LLM Usage}
LLMs were used solely for language polishing.

\section{Additional Related Work}\label{appx:related-works}
\paragraph{Large Reasoning Models.}

Large Reasoning Models (LRMs) have emerged as a family of foundation models~\citep{FoundationModel21}. Since the advent of OpenAI o1~\citep{OpenAIO1_24}, this ``thinking-then-answering'' paradigm has been widely adopted. Notably, o1-like Reasoning Models can solve increasingly complex reasoning problems through elaborate reasoning chains~\citep{CoT22,TreeOfThoughts23,GraphOfThoughts24}. Numerous efforts replicating o1's success include DeepSeek-R1 and Phi-4-Reasoning~\citep{Phi4_24,DeepSeekR125,QwenQwQ25}. Despite impressive progress, the internal mechanisms and behavioral patterns of reasoning in LRMs remain underexplored. \citet{shojaee2025illusion} take a step in this direction by examining reasoning through the lens of problem complexity, though their analysis is limited to a constrained puzzle-solving setting.

\paragraph{Reasoning Length Control.}

Our framework builds upon the contemporary paradigm of adaptive reasoning, in which the reasoning budget of the model is controlled either during post-training~\citep{O1Pruner25,zhou2025mem1} or at test time~\citep{S1_25,ChainOfDraft25,AlphaOne25}. One line of work develops post-training techniques that modulate when and how long a model should reason~\citep{ReasonBound24,ThinkInfo25}. This is achieved through two primary strategies: one involves supervised fine-tuning on variable-length CoT with concise yet sufficient reasoning~\citep{L125,Kimi15}; the other utilizes RL through length penalty~\citep{zhang2025adaptthink,fang2025thinkless,Laser25}. Beyond these, another frontier involves implementing dynamic control over reasoning during inference. For example, some approaches allocate inference budget via confidence~\citep{ConCise25,AnswerEarly25}, while others employ a secondary controller to modulate~\citep{SteerBudget25}.

\section{Limitations}
We acknowledge several limitations. First, our \textsc{LoRe-Mono} currently includes only $40$ seed questions in total. Expanding its topic diversity and coverage is an important direction for future work. Second, we operationalize independence through disjoint sets of mathematical concepts. Although this proxy is not rigorous, it is motivated by the practical difficulty of formalizing independence between questions in an actionable and general way. We leave more refined treatments of independence to future work. Finally, due to budget constraints, we focus on strong open-source LRMs, as evaluating closed-source models would require substantial additional cost.

\section{Proofs and Corollaries}\label{appx:linear-proof}

We first restate Proposition \ref{def:compute_combine} and Proposition \ref{def:accuracy_combine} formally and provide a complete proof, along with corresponding corollaries.

\begingroup
\renewcommand\theproposition{1}

\begin{proposition}[Formal Version]
Fix a question space $\mathcal X$, a \textit{complexity} map $\kappa:\mathcal X\to\mathbb N\cup\{\infty\}$,
and a \textit{reasoning} compute map $C_\theta:\mathcal X\to\mathbb R_{\ge0}$.
Let $\oplus$ be a binary composition operator.
For $m\ge3$ and jointly independent $x_1, \ldots, x_m$, define $x_1\oplus\cdots\oplus x_m$ by a fixed bracketing (e.g. right-associated).
Define
\[
\mathcal X_{\mathrm{fin}}:=\{x\in\mathcal X:\kappa(x)<\infty\},\qquad
K:=\kappa(\mathcal X_{\mathrm{fin}})\subseteq\mathbb N.
\]
All assumptions below are imposed on $\mathcal X_{\mathrm{fin}}$.

\begin{enumerate}
  \item[\textbf{(A1)}] \textbf{Monotonicity.} If $\kappa(x)\le \kappa(y)$ then $C_\theta(x)\le C_\theta(y)$.
  \item[\textbf{(A2)}] \textbf{Additivity under composition of independent questions.}
  If $x,y$ are independent, then
  \[
  \kappa(x\oplus y)=\kappa(x)+\kappa(y),\qquad
  C_\theta(x\oplus y)=C_\theta(x)+C_\theta(y).
  \]
  \item[\textbf{(A3)}]\label{ass:pair-independent}
  For every $u\in K$ and every $m\in\mathbb N$, there exist
  $x_1,\dots,x_m\in\mathcal X_{\mathrm{fin}}$ such that
  \[
  \kappa(x_i)=u\ \text{ for all }i,\qquad \{x_1,\dots,x_m\}\ \text{is jointly independent.}
  \]
  Consequently, $x_1\oplus\cdots\oplus x_m$ is valid and
  \[
  \kappa(x_1\oplus\cdots\oplus x_m)=mu\in K,\qquad
  C_\theta(x_1\oplus\cdots\oplus x_m)=\sum_{i=1}^m C_\theta(x_i).
  \]
\end{enumerate}

Then there exists a constant $\alpha_\theta\ge 0$ such that
\[
C_\theta(x)=\alpha_\theta\,\kappa(x)\qquad\text{for all }x\in \mathcal{X}_{\mathrm{fin}}.
\]
\end{proposition}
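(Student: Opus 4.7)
The plan is to reduce the claim to a scaling functional equation on $K := \kappa(\mathcal{X}_{\mathrm{fin}})$ and solve it using the joint-independence supply guaranteed by (A3). The first move is to push reasoning compute through the complexity map: applying (A1) in both directions, any $x, y \in \mathcal{X}_{\mathrm{fin}}$ with $\kappa(x) = \kappa(y)$ satisfy $C_\theta(x) \le C_\theta(y)$ and $C_\theta(y) \le C_\theta(x)$, hence $C_\theta(x) = C_\theta(y)$. This defines an unambiguous map $f : K \to \mathbb{R}_{\ge 0}$ by $f(\kappa(x)) := C_\theta(x)$, and every subsequent step is a statement about $f$.

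Next, I would translate (A3) into a scaling identity for $f$. For any $u \in K$ and $m \in \mathbb{N}_{\ge 1}$, (A3) supplies jointly independent items $x_1, \ldots, x_m$ each of complexity $u$ whose composition lies in $\mathcal{X}_{\mathrm{fin}}$, has complexity $mu \in K$, and has reasoning compute equal to the sum of the individual computes. This gives
\[
f(mu) \;=\; m\, f(u) \qquad \text{for all } u \in K,\ m \in \mathbb{N}_{\ge 1}.
\]

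The crux is then to show that the slope $f(u)/u$ is independent of $u \in K \setminus \{0\}$. For any $u, v \in K$ with $u, v \ge 1$, I would apply the scaling identity to the common value $uv$ in two different ways,
\[
v\, f(u) \;=\; f(uv) \;=\; u\, f(v),
\]
which forces $f(u)/u = f(v)/v$. Call this common value $\alpha_\theta \ge 0$ (nonnegativity comes from $C_\theta \ge 0$). The edge case $0 \in K$ is handled by taking $m = 2$ and $u = 0$ in the scaling identity, yielding $f(0) = 2 f(0)$ and thus $f(0) = 0 = \alpha_\theta \cdot 0$. Combining, $f(u) = \alpha_\theta\, u$ on all of $K$, so $C_\theta(x) = \alpha_\theta\, \kappa(x)$ for every $x \in \mathcal{X}_{\mathrm{fin}}$.

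The main obstacle is precisely the step $v f(u) = u f(v)$. Because $K$ is only assumed to be a subset of $\mathbb{N}$ that need not contain $1$ nor be closed under addition, one cannot directly reduce to the usual $f(n) = n f(1)$ argument; moreover, (A2) alone yields only pairwise additivity and does not guarantee joint independence for more than two items. The content of (A3) — arbitrary multiplicity $m$ at arbitrary base complexity $u$ — is exactly what licenses the cross-multiplication trick that transports a single slope constant across all strata of $K$.
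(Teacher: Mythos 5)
Your proposal is correct and follows essentially the same route as the paper: collapse $C_\theta$ through $\kappa$ via (A1) to get a well-defined $f:K\to\mathbb R_{\ge0}$, use (A3) together with (A2) to obtain the homogeneity identity $f(mu)=m\,f(u)$, and transport a single slope across strata of $K$ by evaluating $f$ at a common multiple of $u$ and $v$. The only (cosmetic) difference is that you use $uv$ as the common multiple where the paper uses $\operatorname{lcm}(u,v)$, and you handle $0\in K$ by $f(0)=2f(0)$ rather than by treating $K=\{0\}$ as a special case; both choices are equivalent.
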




\begin{proof}
Define an equivalence relation $x\sim y \iff \kappa(x)=\kappa(y)$.
By (A1), $\kappa(x)\le\kappa(y)$ and $\kappa(y)\le\kappa(x)$ imply $C_\theta(x)\le C_\theta(y)$ and $C_\theta(y)\le C_\theta(x)$, hence
$C_\theta(x)=C_\theta(y)$ whenever $x\sim y$.
Thus there is a well-defined $f:K\to\mathbb R_{\ge0}$ with
$f(n)=C_\theta(x)$ for any $x$ such that $\kappa(x)=n$.

Fix $u\in K$ and $m\in\mathbb N$.
By (A3) choose jointly independent $x_1,\dots,x_m$ with $\kappa(x_i)=u$.
By (A2) and the fixed bracketing,
\[
\kappa(x_1\oplus\cdots\oplus x_m)=mu,\qquad
C_\theta(x_1\oplus\cdots\oplus x_m)=\sum_{i=1}^m C_\theta(x_i)=m\,f(u),
\]
so
\[
f(mu)=m\,f(u)\qquad(\forall\,u\in K,\ \forall\,m\in\mathbb N).
\]

If $K=\{0\}$ then $C_\theta\equiv0$ and the claim holds with $\alpha_\theta=0$.
Otherwise take $u,v\in K$ with $u,v>0$ and let $\ell=\mathrm{lcm}(u,v)$.
Then
\[
f(\ell)=f\!\Big(u\cdot\frac{\ell}{u}\Big)=\frac{\ell}{u}\,f(u)
\quad\text{and}\quad
f(\ell)=f\!\Big(v\cdot\frac{\ell}{v}\Big)=\frac{\ell}{v}\,f(v),
\]
hence $f(u)/u=f(v)/v$, independent of $u,v$.
Write this common ratio as $\alpha_\theta\ge0$.
Therefore $f(n)=\alpha_\theta n$ for all $n\in K$, and
$C_\theta(x)=f(\kappa(x))=\alpha_\theta\,\kappa(x)$ for all $x\in\mathcal X_{\mathrm{fin}}$.
\end{proof}

\begin{corollary}[Asymptotic version with sublinear overhead]
If the compositional compute holds up to a sublinear overhead, i.e., for independent $x,y$,
\[
C_\theta(x\oplus y)=C_\theta(x)+C_\theta(y)+o\!\left(\kappa(x)+\kappa(y)\right),
\]
and the same (A3) assumption holds, then the above proof yields
\[
C_\theta(x)=\alpha_\theta\,\kappa(x)+o\big(\kappa(x)\big)\qquad(\kappa(x)\to\infty).
\]
\end{corollary}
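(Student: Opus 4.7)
The plan is to repeat the argument of Proposition~1 line by line, now carrying an $o(\cdot)$ error term through each identity. Well-definedness of $f(n):=C_\theta(x)$ on $K$ is preserved, since (A1) remains exact and $\kappa(x)=\kappa(y)$ still forces $C_\theta(x)=C_\theta(y)$ via the two-sided squeeze from monotonicity. The operating hypothesis is that there exists $\psi:\mathbb{N}\to[0,\infty)$ with $\psi(t)/t\to 0$ such that $|C_\theta(x\oplus y)-C_\theta(x)-C_\theta(y)|\le \psi(\kappa(x)+\kappa(y))$ for every independent pair.

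The central step is to replace the exact identity $f(mu)=m\,f(u)$ by an asymptotic one, $f(mu)=m\,f(u)+E_m(u)$ with $E_m(u)=o(mu)$ as $m\to\infty$, for each fixed $u\in K$ with $u>0$. I would establish this via balanced binary composition of $m=2^k$ jointly independent copies (guaranteed by (A3)), which telescopes to
\[
|E_{2^k}(u)| \;\le\; \sum_{j=1}^{k} 2^{k-j}\,\psi(2^j u) \;=\; 2^k u\sum_{j=1}^{k}\frac{\psi(2^j u)}{2^j u},
\]
so $|E_{2^k}(u)|/(2^k u)$ is a tail-type sum whose summands vanish as $j\to\infty$. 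For arbitrary $m$, monotonicity (A1) sandwiches $mu$ between two consecutive powers-of-two multiples of $u$, transferring the estimate to all $m$.

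Given $E_m(u)=o(mu)$, the lcm step of Proposition~1 carries through. For $u,v\in K$ with $u,v>0$ and $\ell=\mathrm{lcm}(u,v)$, writing $f(\ell)$ via both constructions gives
\[
\tfrac{\ell}{u}f(u) + E_{\ell/u}(u) \;=\; f(\ell) \;=\; \tfrac{\ell}{v}f(v) + E_{\ell/v}(v),
\]
so $|f(u)/u - f(v)/v|\to 0$ as $\ell\to\infty$. Hence $f(n)/n$ is Cauchy along $n\in K$ and converges to some $\alpha_\theta\ge 0$. By the definition of convergence, $C_\theta(x)=f(\kappa(x))=\alpha_\theta\kappa(x)+o(\kappa(x))$ as $\kappa(x)\to\infty$, which is the claim.

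The main obstacle is the iterated-error bound $E_m(u)=o(mu)$. A naive linear chain of pairwise compositions accrues an overhead $\sum_{k=2}^{m}\psi(ku)$ that need not be sublinear in $mu$ in general (e.g.\ $\psi(t)=\sqrt{t}$ yields $\Theta(m^{3/2}\sqrt{u})$, already superlinear in $mu$). Balanced binary grouping reduces the ``depth'' of composition from $O(m)$ to $O(\log m)$ and is the natural way to make the accumulated error genuinely sublinear; closing the estimate requires only mild regularity on $\psi$ (e.g.\ monotonicity of $\psi(t)/t$, or summability of $\psi(2^j u)/(2^j u)$), under which the tail of the telescoped sum goes to zero uniformly. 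A secondary bookkeeping point is that joint independence of every sub-expression formed during the binary tree must be verified inductively from the $m$-wise joint independence supplied by (A3), so that the pairwise overhead bound applies at every internal node.
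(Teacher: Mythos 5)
You have gone considerably further than the paper, which offers no argument for this corollary beyond the assertion that ``the above proof yields'' it, and your diagnosis of the central difficulty is exactly right: iterating the pairwise overhead along a linear chain accrues $\sum_{k}\psi(ku)$, which need not be sublinear in $mu$. However, your balanced-binary-tree repair does not close the gap. In your own bound $|E_{2^k}(u)|/(2^k u)\le \sum_{j=1}^{k}\psi(2^j u)/(2^j u)$, the very first term $\psi(2u)/(2u)$ is a fixed positive constant: it comes from the $2^{k-1}$ bottom-level joins, each incurring overhead up to $\psi(2u)$, which together already contribute $\Theta(2^k)$ to $E_{2^k}(u)$. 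No regularity hypothesis on $\psi$ (monotonicity of $\psi(t)/t$, summability of the series, etc.) can make this sum tend to zero --- at best it stays bounded --- so the construction yields only $f(mu)=m f(u)+O(mu)$, not $o(mu)$, and the subsequent lcm/Cauchy step has nothing to work with. Two further slips: ``$|f(u)/u-f(v)/v|\to 0$ as $\ell\to\infty$'' is vacuous because $\ell=\mathrm{lcm}(u,v)$ is fixed once $u,v$ are (you would need to run the comparison along common multiples $N\to\infty$); and if $E_m(u)=o(mu)$ really held for every fixed $u$, that argument would force $f(u)/u$ to be the \emph{same constant for all} $u$, i.e.\ the exact law of Proposition~1, not merely an asymptotic one --- a sign that the hypotheses are being used more strongly than intended.

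The deeper point is that the corollary, read with only a pairwise $o(\kappa(x)+\kappa(y))$ overhead, is not a routine perturbation of Proposition~1 and is in fact false at the level of the induced function $f$ on $K$: a nondecreasing $f$ with $|f(m+n)-f(m)-f(n)|=o(m+n)$ need not satisfy $f(n)/n\to\alpha$. For instance $f(n)=n\bigl(2+\sin\sqrt{\log n}\bigr)$ is eventually increasing and nearly additive in this sense, yet $f(n)/n$ oscillates forever. The classical de Bruijn--Erd\H{o}s results on near-additive sequences show that convergence of $f(n)/n$ requires a Dini-type condition such as $\sum_k \psi(2^k)2^{-k}<\infty$, strictly stronger than $\psi(t)=o(t)$. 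So the corollary can only be salvaged by either (a) assuming, as the ``consequently'' clause of (A3) implicitly does when transplanted to the asymptotic setting, that the \emph{$m$-fold} composition carries a single overhead that is $o(mu)$ in the total complexity --- in which case $f(mu)=m f(u)+o(mu)$ is immediate and no tree construction is needed --- or (b) strengthening the pairwise hypothesis to a summable/Dini overhead and invoking the near-additivity machinery. As written, your argument --- like the paper's --- does not establish the stated conclusion.
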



\begingroup
\renewcommand\theproposition{2}
\begin{proposition}[Formal Version]
Let $\mathcal{X}_{\mathrm{fin}}=\{x\in \mathcal{X}:\kappa(x)<\infty\}$. Assume the setting and independence notion of Property~\ref{pro:accuracy_mono}, Property~\ref{pro:accuracy_compo}, and Assumption (A3).
Then there exists $\lambda_\theta\ge 0$ such that for all $x\in \mathcal{X}_{\mathrm{fin}}$ with $0<A_\theta(x)\leq1$,
\[
A_\theta(x)=\exp\!\big(-\lambda_\theta\,\kappa(x)\big).
\]
\end{proposition}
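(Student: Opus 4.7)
The plan is to mirror the proof of Proposition~1 after converting the multiplicative structure of accuracy into an additive one by taking logarithms. First I would show that the accuracy $A_\theta$ factors through the complexity $\kappa$. Indeed, monotonicity (Property~\ref{pro:accuracy_mono}) applied in both directions yields that $\kappa(x)=\kappa(y)$ implies $A_\theta(x)\ge A_\theta(y)$ and $A_\theta(y)\ge A_\theta(x)$, hence equality. So I can define a well-defined function $h:K\to(0,1]$ by $h(n)=A_\theta(x)$ for any $x\in\mathcal X_{\mathrm{fin}}$ with $\kappa(x)=n$ (note $h(n)>0$ because the hypothesis restricts to $A_\theta(x)>0$). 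Setting $g(n):=-\log h(n)\ge 0$ gives a nonnegative function whose additivity I will establish next.

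Second, I would invoke Assumption (A3) exactly as in the proof of Proposition~1 to obtain, for each $u\in K$ and each $m\in\mathbb N$, jointly independent $x_1,\dots,x_m$ with $\kappa(x_i)=u$ and $\kappa(x_1\oplus\cdots\oplus x_m)=mu$. Repeatedly applying the multiplicative compositionality (Property~\ref{pro:accuracy_compo}) to a fixed bracketing yields
\begin{equation*}
A_\theta(x_1\oplus\cdots\oplus x_m)=\prod_{i=1}^m A_\theta(x_i)=h(u)^m.
\end{equation*}
Taking $-\log$ of both sides gives $g(mu)=m\,g(u)$ for every $u\in K$ and $m\in\mathbb N$, which is the analogue of the additive identity used in Proposition~1.

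Third, I would run the same least-common-multiple argument. If $K=\{0\}$, the statement holds trivially with $\lambda_\theta=0$. Otherwise, take $u,v\in K$ with $u,v>0$ and let $\ell=\mathrm{lcm}(u,v)$; then $g(\ell)=(\ell/u)g(u)=(\ell/v)g(v)$, so the ratio $g(u)/u$ is independent of $u$. Denoting this common value by $\lambda_\theta\ge 0$ (nonnegativity follows from $g\ge 0$), I obtain $g(n)=\lambda_\theta n$ on $K$, and hence
\begin{equation*}
A_\theta(x)=\exp\!\bigl(-g(\kappa(x))\bigr)=\exp\!\bigl(-\lambda_\theta\,\kappa(x)\bigr)
\end{equation*}
for every $x\in\mathcal X_{\mathrm{fin}}$ with $0<A_\theta(x)\le 1$.

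The main obstacle, which is mild, is justifying the logarithmic transform: one must use the hypothesis $A_\theta(x)>0$ to guarantee that $g$ is finite, and one must confirm that multiplicativity extends from pairs to $m$-fold compositions under the fixed bracketing used in (A3). The latter follows by induction once one checks that compositions of jointly independent questions remain pairwise independent with the aggregated block, mirroring the use of (A3) in Proposition~1; no new idea beyond the compute-law proof is required.
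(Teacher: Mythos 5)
Your proposal is correct and follows essentially the same route as the paper's proof: factor $A_\theta$ through $\kappa$ via monotonicity, pass to $g(n)=-\log A_\theta$ to turn multiplicativity into additivity, use (A3) to obtain the homogeneity identity $g(mu)=m\,g(u)$, and then apply the least-common-multiple argument to conclude $g(n)=\lambda_\theta n$. The brief closing remarks about justifying the logarithmic transform (which uses $A_\theta(x)>0$) and about extending the pairwise property to $m$-fold compositions under the fixed bracketing correspond exactly to what the paper handles implicitly via (A3) and the phrase ``By (A2) and the fixed bracketing.''
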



\begin{proof}
Define an equivalence relation $x\sim y$ iff $\kappa(x)=\kappa(y)$. By (A1), if $x\sim y$ then
both $\kappa(x)\le \kappa(y)$ and $\kappa(y)\le \kappa(x)$ hold, hence $A_\theta(x)\ge A_\theta(y)$ and
$A_\theta(y)\ge A_\theta(x)$, so $A_\theta(x)=A_\theta(y)$. Therefore there exists a well-defined map
\[
f:K\to(0,1],\qquad f(n):=A_\theta(x)\ \text{ for any }x\in\mathcal X_{\mathrm{fin}}\text{ with }\kappa(x)=n.
\]
Let $g:K\to\mathbb R_{\ge0}$ be $g(n):=-\log f(n)$.

Fix $u\in K$ and $m\in\mathbb N$. By (A3), choose jointly independent $x_1,\dots,x_m$ with $\kappa(x_i)=u$.
By (A2) and the fixed bracketing,
\[
\kappa(x_1\oplus\cdots\oplus x_m)=mu,\qquad
A_\theta(x_1\oplus\cdots\oplus x_m)=\prod_{i=1}^m A_\theta(x_i)=\bigl(f(u)\bigr)^m.
\]
Hence
\begin{equation}\label{eq:homog}
g(mu)=-\log A_\theta(x_1\oplus\cdots\oplus x_m)=m\,g(u)\qquad(\forall u\in K,\ \forall m\in\mathbb N).
\end{equation}

If $K=\{0\}$ then $A_\theta\equiv1$ and the claim holds with $\lambda_\theta=0$.
Otherwise, let $u,v\in K$ with $u,v>0$ and set $\ell=\mathrm{lcm}(u,v)$.
Applying Eqn.~\ref{eq:homog} twice gives
\[
g(\ell)=g\!\Big(u\cdot\frac{\ell}{u}\Big)=\frac{\ell}{u}\,g(u)
\quad\text{and}\quad
g(\ell)=g\!\Big(v\cdot\frac{\ell}{v}\Big)=\frac{\ell}{v}\,g(v),
\]
so $g(u)/u=g(v)/v$. This ratio is independent of $u,v>0$ in $K$; denote it by $\lambda_\theta\ge0$.

For any $n\in K$, if $n=0$ then $g(n)=0=\lambda_\theta n$; if $n>0$ pick any $u\in K\setminus\{0\}$ and write
$n=\frac{n}{u}\,u$ to get from Eqn.~\ref{eq:homog} that
$g(n)=\frac{n}{u}g(u)=\lambda_\theta n$. Therefore $g(n)=\lambda_\theta n$ for all $n\in K$, i.e.
$f(n)=\exp(-\lambda_\theta n)$, and for any $x\in\mathcal X_{\mathrm{fin}}$,
\[
A_\theta(x)=f\bigl(\kappa(x)\bigr)=\exp\!\big(-\lambda_\theta\,\kappa(x)\big).
\]
\end{proof}

\begin{corollary}[Asymptotic version with sublinear coupling]
If for independent $x,y$ the multiplicativity holds up to a sublinear deviation in the exponent,
\[
\log A_\theta(x\oplus y)=\log A_\theta(x)+\log A_\theta(y)+o\!\big(\kappa(x)+\kappa(y)\big),
\]
and \emph{(A3)} holds, then
\[
\log A_\theta(x)=-\lambda_\theta\,\kappa(x)+o\big(\kappa(x)\big)\quad(\kappa(x)\to\infty),
\]
equivalently $A_\theta(x)=\exp\!\big(-\lambda_\theta \kappa(x)+o(\kappa(x))\big)$.
\end{corollary}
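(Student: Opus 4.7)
The plan is to parallel the proof of the exact Proposition~\ref{def:accuracy_combine}, carrying the sublinear error through the same algebraic identities. Set $g := -\log A_\theta$ and let $\phi(N) = o(N)$ be a uniform bound on the pairwise deviation, so the hypothesis becomes $|g(x\oplus y) - g(x) - g(y)| \le \phi(\kappa(x)+\kappa(y))$. Monotonicity still forces $g$ to depend only on $\kappa(x)$, so I would again define $f: K \to \mathbb{R}_{\ge 0}$ by $f(\kappa(x)) = g(x)$, reducing the problem to an approximate Cauchy equation $|f(n_1+n_2) - f(n_1) - f(n_2)| \le \phi(n_1+n_2)$ on the complexity set $K \subseteq \mathbb{N}$.

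The next step is to establish approximate homogeneity $f(mu) = m f(u) + o(mu)$. Using (A3) to obtain jointly independent $x_1,\ldots,x_m$ of complexity $u$ and composing them along a balanced binary tree (rather than right-associated, to keep the composition depth logarithmic), I would derive $f(mu) = m f(u) + E(m,u)$ where $E$ aggregates pairwise errors across internal nodes. Passing to $f(mu)/(mu) = f(u)/u + E(m,u)/(mu)$, and granting $E(m,u)/(mu) \to 0$, I would mimic the lcm argument of Proposition~\ref{def:accuracy_combine}: applying the relation to the refined subsequence $n_m := muv$ from both bases $u$ and $v$ forces $f(u)/u$ and $f(v)/v$ to agree asymptotically, producing a common limit $\lambda_\theta$. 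Monotonicity of $f$ then sandwiches $f(n)/n$ for general $n$ between $f(mu)/(mu)$ and $f((m+1)u)/((m+1)u)$, yielding $f(n)/n \to \lambda_\theta$, equivalently $f(n) = \lambda_\theta n + o(n)$, which rewrites as $\log A_\theta(x) = -\lambda_\theta \kappa(x) + o(\kappa(x))$.

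The main obstacle lies in controlling $E(m,u)$. Naive right-associated composition gives $|E(m,u)| \le \sum_{k=2}^m \phi(ku) = o(m^2 u)$ by a Ces\`aro-type estimate, so $E/(mu)$ is only $o(m)$, far from the $o(1)$ needed. The balanced-tree construction cuts depth to $O(\log m)$ and yields $|E(m,u)| \le mu \sum_{j=1}^{\lceil \log_2 m \rceil} \delta(2^j u)$ with $\delta(N) := \phi(N)/N \to 0$, but this sum is only $O(\log m)$ under merely pointwise decay. Closing this gap requires either a modest strengthening (e.g.\ summability of $\delta$ along dyadic scales, as in Hyers-Ulam stability for the approximate Cauchy equation) or, more naturally, interpreting the corollary's pairwise hypothesis as extending uniformly to $m$-fold compositions with total deviation $o(\sum_i \kappa(x_i))$, under which the argument runs essentially verbatim as the asymptotic analogue of Proposition~\ref{def:accuracy_combine}.
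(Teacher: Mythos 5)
You have correctly diagnosed the real issue here, and your analysis is more careful than the paper's. The paper supplies no proof of this corollary at all: the compute-law analogue is dispatched with ``the above proof yields,'' and the accuracy-law version is stated with no justification, the implicit claim being that the exact argument for Proposition~2 transfers verbatim. It does not. The exact proof obtains $g(mu)=m\,g(u)$ by applying compositionality to the $m$-fold composite \emph{in one shot}; if only the pairwise relation is available with deviation $o(\kappa(x)+\kappa(y))$, iterating it along the right-associated bracketing accumulates an error of order $\sum_{k=2}^{m}\phi(ku)$, which after dividing by $mu$ need not vanish, and your balanced-tree variant still leaves a factor that can grow with $m$ under merely pointwise decay of $\delta(N)=\phi(N)/N$. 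This is not a removable technicality: the function $g(n)=n\log\log(n+e^{e})$ satisfies the pairwise hypothesis --- one checks $\sup_{n_1+n_2=N}\lvert g(N)-g(n_1)-g(n_2)\rvert=O(N/\log N)=o(N)$ --- is nonnegative and monotone (so consistent with (A1)), yet is not of the form $\lambda n+o(n)$ for any constant $\lambda$. So the corollary is false under the literal pairwise reading, and your refusal to paper over the accumulation is the right call.

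Your proposed repair --- read the hypothesis as a deviation of $o\bigl(\sum_i\kappa(x_i)\bigr)$ for the $m$-fold composites that (A3) supplies, mirroring how the exact proof invokes compositionality directly on $x_1\oplus\cdots\oplus x_m$ --- is evidently what the authors intend, and under it the argument closes. In fact it closes more easily than your sketch suggests: if $\lvert f(mu)-m f(u)\rvert\le\phi(mu)$ with a single gauge $\phi(N)=o(N)$, then $f(mu)/(mu)\to f(u)/u$ as $m\to\infty$ for each fixed $u$, and comparing along the common subsequence $\{muv\}_m$ forces $f(u)/u=f(v)/v$ for all $u,v\in K\setminus\{0\}$; the monotone sandwich and the careful asymptotic lcm bookkeeping are then unnecessary (one even recovers exact linearity on $K$ under this uniform reading, which is stronger than the stated conclusion). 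The alternative Hyers--Ulam route you mention (summability of $\delta$ along dyadic scales) would also work but is a genuinely stronger hypothesis than what the corollary states. In short: your proof strategy is sound modulo the hypothesis-strengthening you yourself flag, and that strengthening is necessary, not optional.
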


\section{Additional Details and Results of LoRe-Bench}\label{app:lore-bench-q-examples}

\subsection{Additional Details of LoRe-Mono}

\subsubsection{Example seed questions of LoRe-Mono}

Here we provide one representative seed question example for each domain.

\begin{tcolorbox}[          
  enhanced,
  colback=MorandiLighterBlue,   
  colframe=MorandiLightBlue,
  width=\textwidth, 
  boxsep=1mm,
  title={Math - Example seed question},
  coltitle=black, 
  center title, 
  toptitle=0.6mm,
  bottomtitle=0.6mm,
  fonttitle=\bfseries\normalsize,
  left=3mm,                 
  right=3mm,                
  top=2mm,                   
  bottom=2mm,
  ]
Given an integer \(n=\{\textcolor{red}{N}\}\), consider the order-2 recurrence over integers modulo {M} with an alternating update rule and a mild nonlinear term.
You are given the initial values
\[
x_0 = {x0},\quad x_1 = {x1}.
\]

We update the sequence one step at a time. Let \(t = 1,2,3,\dots,n\) denote the update index, where \(t=1\) is the update that produces \(x_2\) from \((x_1, x_0)\).
At each update \(t\), compute \(x_{{k+1}}\) from \((x_k, x_{{k-1}})\) using the parity of \(t\):

- Define the nonlinear map \(\varphi(z) = (z + 1)^2\). (You may reduce intermediate values modulo {M} at any time.)

- \textbf{Odd step (t odd):}
  \[
  x_{{k+1}} \equiv {A}x_k + {B}x_{{k-1}} + {C}\,\varphi(x_k) \pmod{{{M}}}.
  \]

- \textbf{Even step (t even):}
  \[
  x_{{k+1}} \equiv {A}x_k - {B}x_{{k-1}} + {C}\,\varphi(x_{{k-1}}) \pmod{{{M}}}.
  \]

For clarity, the first two updates are:
\[
\begin{aligned}
    t=1:\;& x_2 \equiv {A}x_1 + {B}x_0 + {C}\,\varphi(x_1) \pmod{{{M}}},\\
    t=2:\;& x_3 \equiv {A}x_2 - {B}x_1 + {C}\,\varphi(x_1) \pmod{{{M}}}.
\end{aligned}
\]

Apply exactly \(n-1\) updates starting from \(x_0, x_1\) to reach \(x_n (n={n})\), and **return \(x_{n}\)** as a single non-negative integer in \([0, {M-1}]\).

Conventions:
- All modular reductions are taken modulo {M} and return a non-negative remainder.
- The alternating rule depends on the **update index** \(t\).
- Output only the integer value of \(x_n\) (no extra text).\
\end{tcolorbox}

\begin{tcolorbox}[          
  enhanced,
  colback=MorandiLighterBlue,   
  colframe=MorandiLightBlue,
  width=\textwidth, 
  boxsep=1mm,
  title={\normalsize Science - Example seed question},
  fonttitle==\small,
  coltitle=black, 
  center title, 
  toptitle=0.6mm,
  bottomtitle=0.6mm,
  fonttitle=\bfseries\large,
  left=3mm,                 
  right=3mm,                
  top=2mm,                   
  bottom=2mm,
  ]
You are modeling a **batch bioreactor** where an enzyme E converts substrate A to product B, but each catalytic event requires a recyclable **cofactor token** C (e.g., NAD$^*$/NADH).
Let \(A_t, B_t, C_t\) be the nonnegative integer counts of A, B, and C **after** completing tick \(t\).
You are given fixed initial counts and a regeneration period:
- \(A_0 = {args.A0}\), \(B_0 = {args.B0}\), \(C_0 = {args.C0}\)
- Regeneration period \(k = {k}\)\\

For **each discrete tick** \(t = 1,2,\dots,n\) (with \(n=\{\textcolor{red}{N}\}\)), apply the following **biochemical rule order**:

1) **Reaction (consumes cofactor)** — if both substrate and cofactor are available:
   - If \(A_{{t-1}} > 0\) **and** \(C_{{t-1}} > 0\), then one catalytic turnover occurs:
     \[
     A_t = A_{{t-1}} - 1,\quad B_t = B_{{t-1}} + 1,\quad C_t = C_{{t-1}} - 1.
     \]
   - Otherwise, no reaction this tick:
     \[
     A_t = A_{{t-1}},\ B_t = B_{{t-1}},\ C_t = C_{{t-1}}.
     \]

2) **Cofactor regeneration (post-reaction)** — models an external respiratory/oxidative cycle returning the cofactor to its usable form at fixed intervals:
   - If \(t \bmod k = 0\), then **after** the reaction stage:
     \[
     C_t \leftarrow C_t + 1.
     \]

All updates are integer and at most \(\pm 1\) per tick (“min/+=1” granularity).  
**Output** the product count \(B_n\) after completing exactly \(n=\{\textcolor{red}{N}\}\) ticks (i.e., after applying the regeneration rule at tick \(n\)).

\end{tcolorbox}

\begin{tcolorbox}[          
  enhanced,
  colback=MorandiLighterBlue,   
  colframe=MorandiLightBlue,
  width=\textwidth, 
  boxsep=1mm,
  title={\normalsize Language - Example seed question},
  fonttitle==\small,
  coltitle=black, 
  center title, 
  toptitle=0.6mm,
  bottomtitle=0.6mm,
  fonttitle=\bfseries\large,
  left=3mm,                 
  right=3mm,                
  top=2mm,                   
  bottom=2mm,
  ]
You are given a letter maze and a number of moves n=\{\textcolor{red}{\textit{N}}\}. The maze is a rectangular grid of letters G with h=\{H\} rows and w=\{W\} columns:\\

\{grid\_block\}\\

Start at the cell (r0, c0) = (\{R0\}, \{C0\}). Build a string S as you move:\\
1) First, write down the starting letter G[r0][c0] into S. (This is done before any moves.)\\
2) Then repeat the following exactly n=\{step\} times (t = 1..n):\\
   • Let (r, c) be your current cell BEFORE moving, and let ch = G[r][c].\\
   • Move one step based on ch (case-insensitive):\\
       – If ch $\in$ \{a, e, i, o, u\} (a vowel): move RIGHT → c ← (c + 1) mod \{W\}\\
       – Otherwise (a consonant): move DOWN → r ← (r + 1) mod \{H\}\\
   • After moving to the destination cell (r, c), append its letter G[r][c] to S.\\
   • Now mutate the grid based on ch (the letter you moved FROM):\\
       – If ch is a vowel: cyclically rotate COLUMN c upward by 1.\\
         (Formally, for all i: G[i][c] ← old G[(i + 1) mod {H}][c].)\\
       – Otherwise (ch is a consonant): cyclically rotate ROW r left by 1.\\
         (Formally, for all j: G[r][j] ← old G[r][(j + 1) mod {W}].)\\

Important: The mutation happens AFTER appending G[r][c] to S, and it affects the grid used for the NEXT iteration. Indices are 0-based and the maze wraps around like a torus.

Thus, after n moves, S has length n + 1 (because the starting letter was included).\\

Let k = \{K\}. Your task is to return the word W made by the LAST k letters of S (in order). Output W as a plain string.

\end{tcolorbox}

\begin{tcolorbox}[          
  enhanced,
  colback=MorandiLighterBlue,   
  colframe=MorandiLightBlue,
  width=\textwidth, 
  boxsep=1mm,
  title={\normalsize Code - Example seed question},
  fonttitle==\small,
  coltitle=black, 
  center title, 
  toptitle=0.6mm,
  bottomtitle=0.6mm,
  fonttitle=\bfseries\large,
  left=3mm,                 
  right=3mm,                
  top=2mm,                   
  bottom=2mm,
  ]

You are given runnable Python 3.10 code. Execute it exactly as-is in a clean environment (no extra imports).
This is a Code Execution task: run the program, do not rewrite it. The loop counter i is 0-based.
Return only the value of ANSWER (no other text, no formatting).\\
Code:
\begin{lstlisting}[style=mypython,escapeinside=@@]
N = {@\textcolor{red}{\textit{N}}@}; s = {init_state!r}

def f(s, i):
    if len(s) == 0:
        return s
    L = len(s); r = (i % L) + 1; s1 = s[r:] + s[:r]
    trans = str.maketrans({'a':'e','e':'i','i':'o','o':'u','u':'a'})
    return s1.translate(trans)

for i in range(N):
    s = f(s, i)

ANSWER = s
\end{lstlisting}

\end{tcolorbox}

\subsubsection{Potential Shortcut Failures}
A basic requirement of \textsc{LoRe-Mono} is that the complexity of question variants increases monotonically with the variant index. However, certain seed questions may violate this requirement. For instance, if answers follow a periodic pattern (\textit{e.g.}, when all even-indexed variants have the answer $1$ and all odd-indexed variants have the answer $0$), a model could exploit prior patterns to guess the correct answer without performing the intended computation. To ensure benchmark reliability, we manually reviewed all variants and excluded those exhibiting periodic answer patterns.

\subsection{Additional Results of LoRe-Mono}
\label{app:lore-add-results}

\subsubsection{Additional Visualization Results}
Visualizations of monotonicity results for the remaining 5 LRMs are provided in Fig.~\ref{fig:appendix_lore_bench_mono_results}.
\\
\begin{figure}[htbp]
  \centering
  \includegraphics[width=0.95\linewidth]{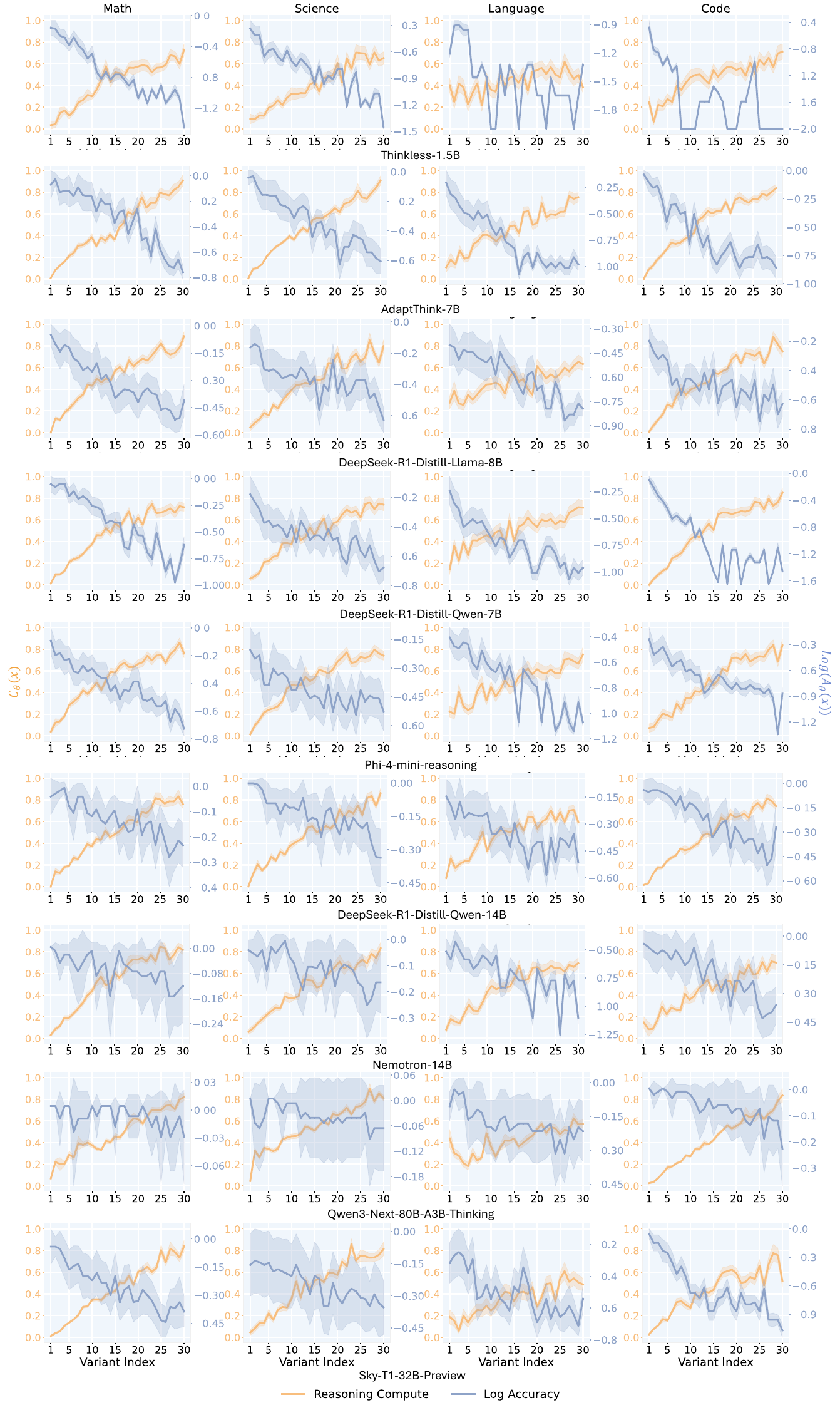}
  \caption{\textbf{Visualizations of Monotonicity Results on Remaining LRMs.}}
  \label{fig:appendix_lore_bench_mono_results}
\end{figure}

\newpage
\subsubsection{"Fail to Think" Case Study}\label{appx:lore-mono-case-study}
We observe that the weakest model, DeepSeek-R1-Distill-Qwen-1.5B, occasionally produces no reasoning tokens on certain code questions. For example, on variant 1 of a seed code question, the generated output is simply \texttt{<think>\textbackslash n</think>}, corresponding to a reasoning length of~$0$.

\begin{tcolorbox}[          
  enhanced,
  colback=MorandiLighterBlue,   
  colframe=MorandiLightBlue,
  width=\textwidth, 
  boxsep=1mm,
  title={\normalsize "Fail to think" - Example model output},
  fonttitle==\small,
  coltitle=black, 
  center title, 
  toptitle=0.6mm,
  bottomtitle=0.6mm,
  fonttitle=\bfseries\large,
  left=3mm,                 
  right=3mm,                
  top=2mm,                   
  bottom=2mm,
  ]
\textcolor{red}{<think>\textbackslash n</think>}\textbackslash n\textbackslash n To solve this problem, we need to simulate the execution of a given Python code and determine the final value assigned to the variable `ANSWER`. The code involves string manipulation and state transitions, which we will simulate step by step.\textbackslash n\textbackslash n\#\#\# Approach\textbackslash n1. **Initialization**:\textbackslash n   - We start with `N = 1` iteration, `s = \textbackslash "01938\textbackslash "`, and a function `f` that processes the string `s`.\textbackslash n\textbackslash n2. **Function Definition**:\textbackslash n   - The function `f` takes a string `s` and processes each character based on its position in the string. The position determines whether the count of '0', '1', '2', or '3' is incremented modulo 5.\textbackslash n\textbackslash n3. **Processing Each Character**:\textbackslash n   - For each character in `s`, we count the occurrences of '0', '1', '2', and '3'..... (\textit{omitted ...})
\end{tcolorbox}

\subsection{Additional Details of LoRe-Compo}
For all composite questions, a fixed connector links the sub-questions, ensuring they are answered in sequential order. An illustrative example is provided below.

\begin{tcolorbox}[          
  enhanced,
  colback=MorandiLighterBlue,   
  colframe=MorandiLightBlue,
  width=\textwidth, 
  boxsep=1mm,
  title={\normalsize Example composite question},
  fonttitle==\small,
  coltitle=black, 
  center title, 
  toptitle=0.6mm,
  bottomtitle=0.6mm,
  fonttitle=\bfseries\large,
  left=3mm,                 
  right=3mm,                
  top=2mm,                   
  bottom=2mm,
  ]
Answer the following questions in order:

Q1. Jim and Martha are standing together at the corner of a rectangular field. Jim walks diagonally across the field. Martha gets to the same location by walking along its length and width. The field is 300 feet wide and 400 feet long. How many feet less than Martha does Jim walk?

Q2. Find all values of $x$ that satisfy the equation $x = \!\sqrt{11-2x} + 4$.\\
\end{tcolorbox}

\subsection{Additional Results of LoRe-Compo}\label{}
Visualizations of compositionality results for the remaining LRMs are provided in Fig.~\ref{fig:compo-bf-sft}.

\begin{figure}[ht]
    \centering
    \includegraphics[width=\linewidth]{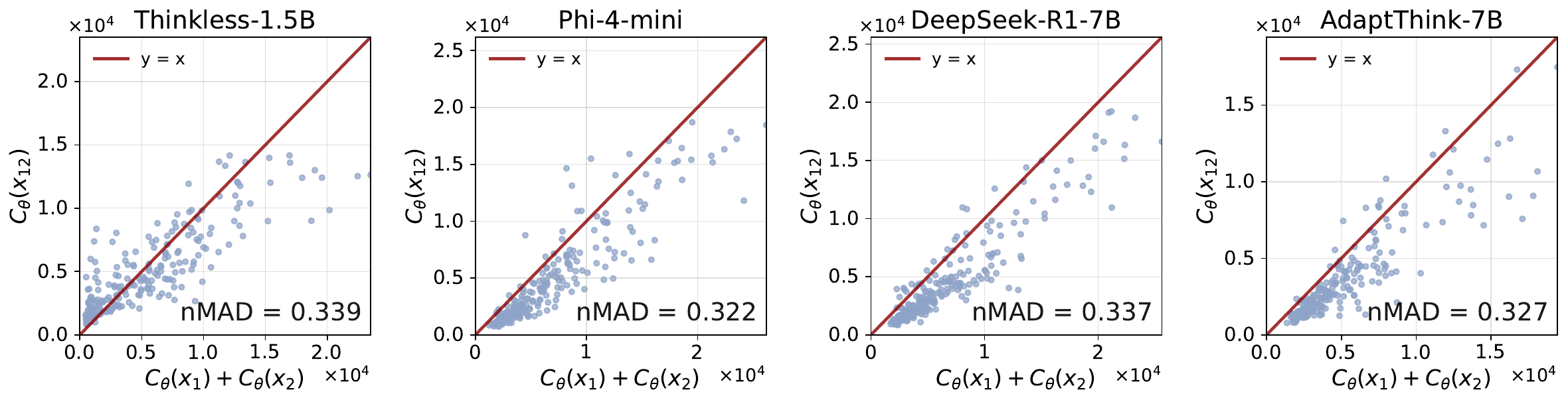}
    \includegraphics[width=\linewidth]{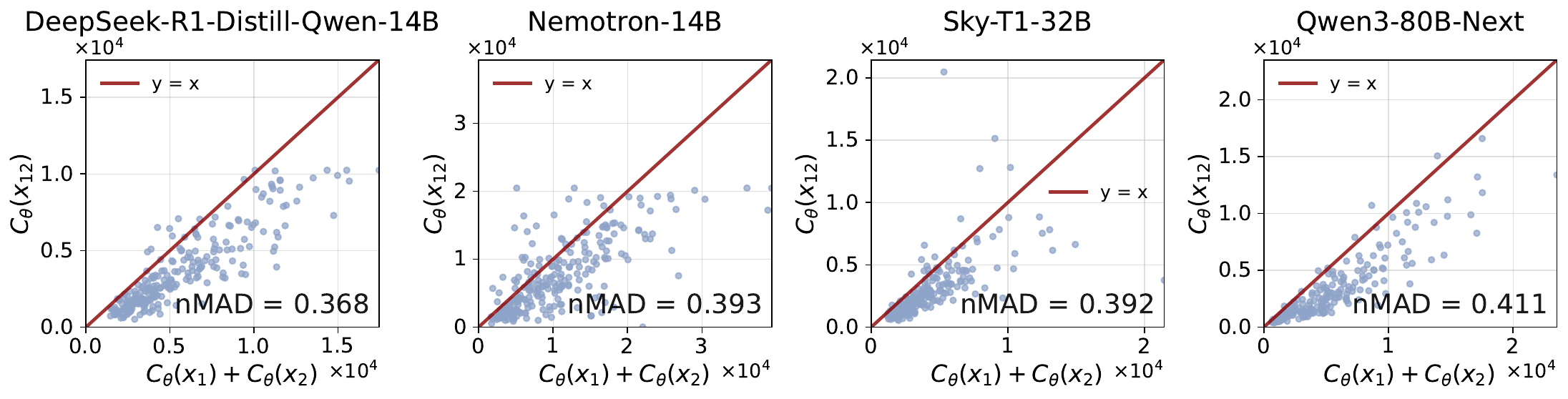}
    \caption{\textbf{Visualizations of Reasoning Compute Compositionality on More LRMs.}}
    \label{fig:compo-bf-sft}
\end{figure}

\section{Experimental Details and Additional Results}\label{appx:lore_bench_additional_results}

\subsection{Implementation Details}
\label{app-imple}

Since DeepScaler does not come with predefined categories, we first annotate each question using GPT-4.1-mini to assign it to one of the following categories: Algebra (Prealgebra), Counting \& Probability, Geometry, Number Theory, or Calculus (Precalculus). Based on these annotations, we construct sub-question and composite-question triplets by pairing questions from different categories. 

For SFT, we perform a grid search over learning rates in \{1e-6, 5e-6, 5e-5\}, using a batch size of 8, gradient accumulation of 2, and a warmup ratio of 0.

\subsection{Additional Experimental Results}
\label{app}
In Fig.~\ref{fig:compo-bf-af-sft-compare}, we further compare the reasoning compute compositionality of DeepSeek-R1-Distill-Qwen-7B and DeepSeek-R1-Distill-Llama-8B before and after \texttt{SFT-Compo}. With \texttt{SFT-Compo}, the $\mathrm{nMAD}$ decreases, and the results align more closely with the $y = x$ line compared to their base counterparts.

\begin{figure}[ht]
    \centering
    \includegraphics[width=\linewidth]{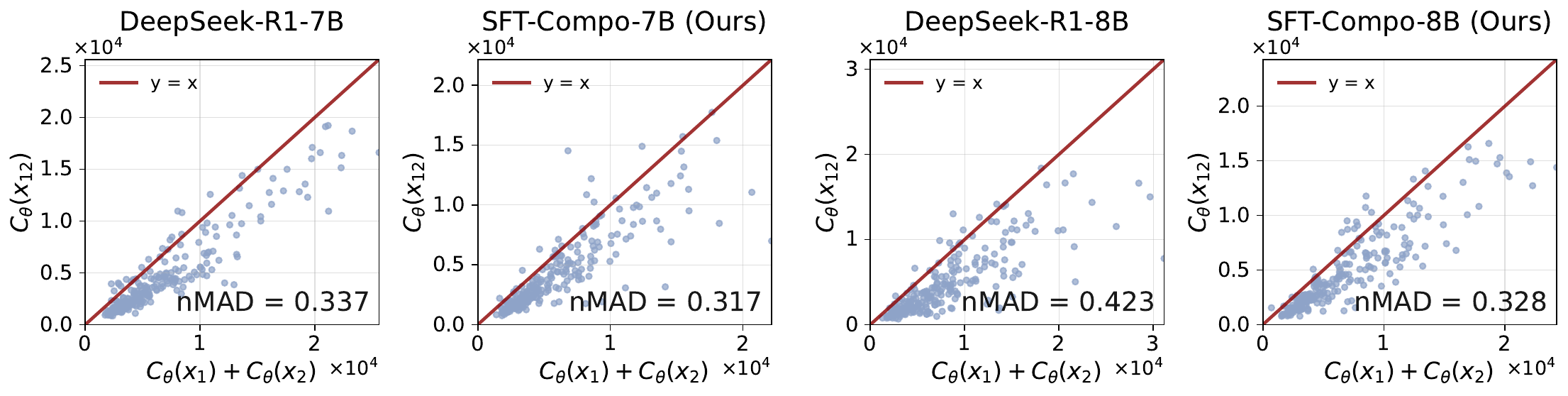}
    \caption{\textbf{Visualizations of Reasoning Compute Compositionality on DeepSeek-R1-Distill-Qwen-7B and DeepSeek-R1-Distill-Llama-8B.}}
    \label{fig:compo-bf-af-sft-compare}
\end{figure}


\section{{Qualitative Examples of LoRe-Compo}}

We present an example generated of DeepSeek-Distill-1.5B and after SFT-Compo. For DeepSeek-Distill-1.5B, we report metrics for the question pair and the composite question with accuracies $A_\theta(x_1)=1.0$, $A_\theta(x_2)=1.0$, $A_\theta(x_{12})=0.25$. 
Reasoning compute is $C_\theta(x_1)=2177.25$, $C_\theta(x_2)=3546.50$, and $C_\theta(x_{12}
)=732.13$, showing that the model severely underthinks on the composite question and compromises the accuracy. After SFT-Compo, the composite accuracy improves to $A_{\tilde{\theta}}(x_{12})=0.875$ with reasoning compute as $C_{\tilde{\theta}}(x_{12})=4300.00$. We provide detailed reasoning processes of the two models below.

\begin{tcolorbox}[          
  enhanced,
  colback=MorandiLighterBlue,   
  colframe=MorandiLightBlue,
  width=\textwidth, 
  boxsep=1mm,
  title={\normalsize Example composite question},
  fonttitle==\small,
  coltitle=black, 
  center title, 
  toptitle=0.6mm,
  bottomtitle=0.6mm,
  fonttitle=\bfseries\large,
  left=3mm,                 
  right=3mm,                
  top=2mm,                   
  bottom=2mm,
  ]
Q: Answer the following questions in order:\\
Q1. Pat is to select six cookies from a tray containing only chocolate chip, oatmeal, and peanut butter cookies. There are at least six of each of these three kinds of cookies on the tray. How many different assortments of six cookies can be selected? (Note that cookies of the same type are not distinguishable.)\\
Q2. Let $f(x) = x^{10}+5x^9-8x^8+7x^7-x^6-12x^5+4x^4-8x^3+12x^2-5x-5$.

Without using long division (which would be horribly nasty!), find the remainder when $f(x)$ is divided by $x^2-1$.\\
\end{tcolorbox}

\begin{tcolorbox}[          
  enhanced,
  colback=MorandiLighterBlue,   
  colframe=MorandiLightBlue,
  width=\textwidth, 
  boxsep=1mm,
  title={\normalsize Example reasoning process (DeepSeek-R1-1.5B)},
  fonttitle==\small,
  coltitle=black, 
  center title, 
  toptitle=0.6mm,
  bottomtitle=0.6mm,
  fonttitle=\bfseries\large,
  left=3mm,                 
  right=3mm,                
  top=2mm,                   
  bottom=2mm,
  ]
Q1. To determine the number of different assortments of six cookies, I need to consider the number of ways to select six cookies from three types: chocolate chip, oatmeal, and peanut butter. Since the cookies of the same type are indistinguishable, the order of selection doesn't matter.\\

I can model this problem using combinations with repetition. The formula for the number of ways to choose \( k \) items from \( n \) types is given by:
\[
\binom{n + k - 1}{k}
\]
In this case, \( n = 3 \) (types of cookies) and \( k = 6 \) (cookies to select). Plugging in the values:
\[
\binom{3 + 6 - 1}{6} = \binom{8}{6}
\]
Calculating \( \binom{8}{6} \) gives:
\[
\binom{8}{6} = \frac{8!}{6!(8 - 6)!} = 28 \quad\textcolor{blue}{\text{[Correct Answer]}}
\]
Therefore, there are 28 different assortments of six cookies that can be selected.\\

Q2. To find the remainder when \( f(x) \) is divided by \( x^2 - 1 \) without using long division, I can utilize the Remainder Theorem. The Remainder Theorem states that the remainder of a polynomial \( f(x) \) divided by \( x - a \) is \( f(a) \). However, since the divisor here is \( x^2 - 1 \), which factors into \( (x - 1)(x + 1) \), I need to evaluate \( f(x) \) at the roots of \( x^2 - 1 \), which are \( x = 1 \) and \( x = -1 \).\\

The remainder will be a linear polynomial of the form \( R(x) = ax + b \). To find \( a \) and \( b \), I can set up two equations by plugging in \( x = 1 \) and \( x = -1 \) into \( f(x) \):
1. \( f(1) = a(1) + b \)
2. \( f(-1) = a(-1) + b \) \\

Calculating \( f(1) \):
\[
f(1) = 1^{10} + 5(1)^9 - 8(1)^8 + 7(1)^7 - 1(1)^6 - 12(1)^5 + 4(1)^4 - 8(1)^3 + 12(1)^2 - 5(1) - 5
\]
\[
f(1) = 1 + 5 - 8 + 7 - 1 - 12 + 4 - 8 + 12 - 5 - 5
\]
\[
f(1) = -1
\]

Calculating \( f(-1) \):
\begin{align*}
    f(-1) = (-1)^{10} + 5(-1)^9 - 8(-1)^8 + 7(-1)^7 - 1(-1)^6 - 12(-1)^5 + 4(-1)^4 \\
- 8(-1)^3 + 12(-1)^2 - 5(-1) - 5
\end{align*}
\[
f(-1) = 1 - 5 - 8 - 7 - 1 + 12 + 4 + 8 + 12 + 5 - 5
\]
\[
f(-1) = 13
\]

Now, setting up the equations:
1. \( a + b = -1 \)
2. \( -a + b = 13 \)

Solving these equations:
Adding both equations:
\( 2b = 12 \) => \( b = 6 \)
Substituting \( b = 6 \) into the first equation:
\( a + 6 = -1 \) => \( a = -7 \)\\

Thus, the remainder \( R(x) \) is:
\[
R(x) = -7x + 6 \quad \text{\textcolor{red}{[Incorrect Answer]}}
\] 
</think> \\
\end{tcolorbox}

\begin{tcolorbox}[          
  enhanced,
  colback=MorandiLighterBlue,   
  colframe=MorandiLightBlue,
  width=\textwidth, 
  boxsep=1mm,
  title={\normalsize Example reasoning process (DeepSeek-R1-1.5B SFT-Compo)},
  fonttitle==\small,
  coltitle=black, 
  center title, 
  toptitle=0.6mm,
  bottomtitle=0.6mm,
  fonttitle=\bfseries\large,
  left=3mm,                 
  right=3mm,                
  top=2mm,                   
  bottom=2mm,
  ]
Okay, so I have these two questions to solve. Let me start with the first one.\\

**Q1. Pat is to select six cookies from a tray containing only chocolate chip, oatmeal, and peanut butter cookies. There are at least six of each of these three kinds of cookies on the tray. How many different assortments of six cookies can be selected? (Note that cookies of the same type are not distinguishable.)**\\

Hmm, okay. So Pat is selecting six cookies, and the tray has three types: chocolate chip, oatmeal, and peanut butter. Each type has at least six cookies, so there's no shortage. Since the cookies of the same type are indistinct, we're dealing with combinations with repetition, I think.\\

I remember that the formula for combinations with repetition is C(n + k - 1, k), where n is the number of types and k is the number of items to choose. In this case, n is 3 (since there are three types of cookies) and k is 6 (since Pat is selecting six cookies). So plugging into the formula, it should be C(3 + 6 - 1, 6) = C(8, 6).\\

Wait, let me double-check. So, combinations with repetition: if order doesn't matter and items are indistinct except for type, yes, that's the case here. So, the formula is indeed C(n + k - 1, k). So, n=3, k=6, so 3 + 6 -1 = 8, so C(8,6). Alternatively, C(8,2) since combinations are symmetric, which is 28. So is that the answer?\\

But wait, let me think again. Since each type has at least six cookies, and we're selecting six, does that affect anything? Because if each type has at least six, then when we select six, the maximum number we can take from any one type is six, but since we're only selecting six, the maximum from any type is six, but since we're selecting six in total, the maximum from any type is six, but the minimum is zero.\\

But in this case, since we have at least six of each, so when we choose six, the number of ways is the same as the number of non-negative integer solutions to the equation:\\

x1 + x2 + x3 = 6, where x1, x2, x3 are the number of chocolate chip, oatmeal, and peanut butter cookies respectively.\\

Which is exactly the stars and bars problem, so C(6 + 3 -1, 3 -1) = C(8,2) = 28.\\

So yeah, the answer is 28. That seems right.\\

**Q2. Let \(f(x) = x^{10} + 5x^9 -8x^8 +7x^7 -x^6 -12x^5 +4x^4 -8x^3 +12x^2 -5x -5\).**\\

Without using long division (which would be horribly nasty!), find the remainder when $f(x)$ is divided by $x^2 -1$. Okay, so polynomial division. But $f(x)$ is a degree 10 polynomial, and we're dividing by $x^2 -1$, which is also degree 2. So the remainder should be of degree less than 2, so it should be a linear polynomial, right? So it can be written as ax + b, where a and b are constants.\\

To find a and b, we can use the Remainder Theorem. Since $x^2 -1$ factors as $(x -1)(x +1)$, the remainder when f(x) is divided by $x^2 -1$ is the same as the remainder when f(1) and f(-1) are computed, and then we can set up equations to solve for a and b.\\

Wait, let me recall: if we have f(x) divided by $(x - c)(x - d)$, the remainder is R(x) = e x + f. So, to find R(x), we can evaluate f(c) and f(d), which gives us two equations: $f(c) = e c + f \quad f(d) = e d + f$\\

Then, solving these two equations gives us e and f. In our case, the divisor is $x^2 -1$, which is $(x -1)(x +1)$. So, c =1 and d = -1.\\

\textbf{[To be continued]}
\end{tcolorbox}

\begin{tcolorbox}[          
  enhanced,
  colback=MorandiLighterBlue,   
  colframe=MorandiLightBlue,
  width=\textwidth, 
  boxsep=1mm,
  fonttitle==\small,
  coltitle=black, 
  center title, 
  toptitle=0.6mm,
  bottomtitle=0.6mm,
  fonttitle=\bfseries\large,
  left=3mm,                 
  right=3mm,                
  top=2mm,                   
  bottom=2mm,
  ]

Therefore, we can compute f(1) and f(-1), set up the equations:

$f(1) = a(1) + b = a + b$

$f(-1) = a(-1) + b = -a + b$

Then, solving these two equations will give us a and b.\\

So, let me compute f(1) and f(-1).\\

First, f(1):

$f(1) = (1)^10 + 5*(1)^9 -8*(1)^8 +7*(1)^7 - (1)^6 -12*(1)^5 +4*(1)^4 -8*(1)^3 +12*(1)^2 -5*(1) -5$\\

Simplify each term:

$1 + 5 -8 +7 -1 -12 +4 -8 +12 -5 -5$\\

Let me compute step by step: Start with 1.

$1 +5 =6 \quad 6 -8 =-2 \quad -2 +7=5 \quad 5 -1=4 \quad 4 -12=-8 \quad -8 +4=-4$

$-4 -8=-12 \quad -12 +12=0 \quad 0 -5=-5 \quad -5 -5=-10$

So $f(1) = -10$. Similarly, compute $f(-1)$:

$f(-1) = (-1)^10 +5*(-1)^9 -8*(-1)^8 +7*(-1)^7 - (-1)^6 -12*(-1)^5 +4*(-1)^4 -8*(-1)^3 +12*(-1)^2 -5*(-1) -5$\\

Simplify each term:

$(-1)^10 =1\quad 5*(-1)^9=5*(-1)=-5\quad -8*(-1)^8=-8*(1)=-8$

$7*(-1)^7=7*(-1)=-7\quad - (-1)^6= -1\quad -12*(-1)^5=-12*(-1)=12$

$4*(-1)^4=4*(1)=4 \quad -8*(-1)^3=-8*(-1)=8 \quad 12*(-1)^2=12*(1)=12 \quad -5*(-1)=5$

-5 remains as is.\\

So let's write all terms: $1 -5 -8 -7 -1 +12 +4 +8 +12 +5 -5$\\

Let me compute step by step:\\

Start with 1.\\

$1 -5 =-4 \quad -4 -8=-12 \quad -12 -7=-19= \quad -19 -1=-20$

$-20 +12=-8 \quad -8 +4=-4 \quad -4 +8=4 \quad 4 +12=16 \quad 16 +5=21 \quad 21 -5=16$

So $f(-1)=16$. Therefore, we have: $f(1)=a + b = -10\quad f(-1)= -a + b =16$\\

So now, we have the system of equations: 1) $a + b = -10$ 2) $-a + b =16$. Let me solve this.\\

Add equations 1 and 2: $(a + b) + (-a + b) = (-10) +16$

Simplify: $0a + 2b =6$ So $2b=6 => b=3$. Then, plug b=3 into equation 1: $a +3= -10 => a= -13$. So the remainder is $ax + b = -13x +3$\\

Therefore, the remainder when f(x) is divided by $x^2 -1$ is $-13x +3$. Let me double-check my computations for f(1) and f(-1), just to make sure I didn't make any arithmetic mistakes.\\

**Recomputing f(1):**

$1 +5=6 \quad 6-8=-2 \quad -2+7=5 \quad 5-1=4 \quad 4-12=-8 \quad -8+4=-4$

$-4-8=-12 \quad -12+12=0 \quad 0-5=-5 \quad -5-5=-10$\\

Yes, that's correct. **Recomputing $f(-1)$:**
$1 -5= -4 \qquad
-4 -8= -12 \qquad
-12 -7= -19 \qquad
-19 -1= -20$ $
-20 +12= -8 \qquad
-8 +4= -4 \qquad
-4 +8=4 \qquad
4 +12=16 \qquad
16 +5=21 \qquad
21 -5=16 \qquad
$
Yes, that's correct.\\

So the equations are: $a + b = -10 \quad -a + b =16$\\

\textbf{[To be continued]}

\end{tcolorbox}

\begin{tcolorbox}[          
  enhanced,
  colback=MorandiLighterBlue,   
  colframe=MorandiLightBlue,
  width=\textwidth, 
  boxsep=1mm,
  fonttitle==\small,
  coltitle=black, 
  center title, 
  toptitle=0.6mm,
  bottomtitle=0.6mm,
  fonttitle=\bfseries\large,
  left=3mm,                 
  right=3mm,                
  top=2mm,                   
  bottom=2mm,
  ]

Adding them gives $2b=6 => b=3$, then a= -13. So the remainder is -13x +3. I think that's correct.\\

**Final Answer**\\

Q1: \boxed{28} \textcolor{blue}{[Correct Answer]}\\

Q2: \boxed{-13x + 3} \textcolor{blue}{[Correct Answer]}\\

</think>\\
\end{tcolorbox}


\end{document}